\documentclass{article} 
\usepackage{iclr2027_conference,times}

\usepackage{amsmath,amsfonts,bm}

\def\eqref#1{equation~\ref{#1}}

\def\1{\bm{1}}

\DeclareMathAlphabet{\mathsfit}{\encodingdefault}{\sfdefault}{m}{sl}
\SetMathAlphabet{\mathsfit}{bold}{\encodingdefault}{\sfdefault}{bx}{n}

\usepackage{hyperref}
\usepackage{url}
\usepackage{xcolor}
\usepackage{graphicx}
\usepackage{booktabs}
\usepackage{pifont}
\usepackage[table]{xcolor}
\usepackage{booktabs}
\usepackage{multirow}
\usepackage{makecell}
\usepackage{array}

\usepackage{amsthm}
\newtheorem{proposition}{Proposition}

\renewcommand{\theproposition}{A.\arabic{proposition}}

\title{Coupling Perception and Reasoning in Federated Multimodal Graph Foundation Models}

\author{
\normalfont
Zekai Chen$^{1}$ \quad
Xun Wu$^{1}$ \quad
Hailin Zhang$^{2}$ \quad
Xunkai Li$^{1}$ \quad
Yu Liu$^{3}$ \\
Kairui Yang$^{1}$ \quad
Muyan Huang$^{4}$ \quad
Xuaner Chen$^{2}$ \quad
Rong-Hua Li$^{1}$ \quad
Guoren Wang$^{1}$ \\[1ex]
$^{1}$Beijing Institute of Technology \\
$^{2}$Shandong University \\
$^{3}$Institute of Information Engineering \\
$^{4}$University of Science and Technology Beijing \\
Zekai Chen: \texttt{zackchen02@163.com}
}

\iclrfinalcopy
\begin{document}

\maketitle

\begin{abstract}
Federated multimodal graph foundation models (GFMs) aim to adapt pretrained multimodal models to decentralized graph data, where each client owns a private multimodal graph and cannot share raw information. These models typically combine a multimodal Encoder that extracts semantic evidence from heterogeneous modalities and a graph neural network (GNN) that performs relational reasoning over graph structures. However, existing federated GFM adaptation methods mainly update graph-side modules while keeping the multimodal Encoder frozen, limiting adaptation to \emph{how information is propagated} while fixing \emph{what information is extracted}. Through empirical studies, we reveal that Encoder and GNN adaptations are not
independent: Encoder adaptation is affected by graph relations, while cross-client
module swapping reveals substantial pairing sensitivity between separately
parameterized Encoder and GNN updates. Motivated by this observation, we propose \textbf{FedCORE}, a federated adaptation framework that represents Encoder and GNN updates through a shared low-dimensional latent state. FedCORE jointly optimizes this core from multimodal and structural signals and performs federated evolution directly in the shared state space, preserving compatibility between perception and reasoning adaptations. Extensive experiments demonstrate that FedCORE reduces the Encoder--GNN pairing gap from $30.6$ to $5.9$, corresponding to an $80.7\%$ reduction over independent joint adaptation.
\end{abstract}

\section{Introduction}
\label{sec:introduction}

Multimodal graph foundation models (GFMs) have emerged as a paradigm for
jointly modeling multimodal semantics and graph relations. Combining pretrained
multimodal encoders with graph neural networks (GNNs), multimodal GFMs enable
unified perception and relational reasoning across diverse graph-based
applications~\citep{he2025unigraph2,liu2026toward}, including e-commerce
recommendation~\citep{liu2021pre}, social and content platforms
~\citep{wei2019mmgcn}. However, in practice, these graphs are distributed
across organizations or platforms and cannot be centralized because of privacy,
ownership, and communication constraints. Federated learning therefore provides
a practical paradigm for collaboratively adapting multimodal GFMs without
sharing raw data.

A natural solution jointly adapts the Encoder and GNN on each client, allowing
multimodal perception and graph reasoning to improve jointly on local tasks.
During training, each client updates both modules using its multimodal graph and
downstream objective, while federated optimization typically aggregates their
updates independently. Such module-wise aggregation fails to preserve the
relationship between Encoder and GNN updates established during training,
weakening coordination between multimodal perception and graph reasoning after
federation. Thus, the key challenge is not only enabling Encoder--GNN joint
adaptation but also preserving their coordination during federated optimization
and aggregation.

To understand this coordination, our empirical studies in
Sec.~\ref{sec:empirical} examine three questions:
\textbf{\ding{182} Encoder Adaptation.}
We first test whether stronger graph reasoning can compensate for a fixed
multimodal Encoder. Figure~\ref{fig:obs1} shows that increasing GNN capacity
cannot consistently compensate for the absence of Encoder adaptation,
demonstrating that graph reasoning alone cannot recover the missing
task-relevant multimodal information and that perception adaptation is
necessary.
\textbf{\ding{183} Graph Dependency.}
Given its necessity, we further study whether Encoder adaptation can be
optimized independently from graph structures. Figure~\ref{fig:obs2} shows
that controlled changes in graph relations systematically alter Encoder
updates even when multimodal contents and labels remain unchanged, revealing
that Encoder adaptation depends on the relational environment where
representations are utilized.
\textbf{\ding{184} Update Coupling.}
Since Encoder adaptation is affected by graph relations, we finally examine
whether locally co-adapted Encoder and GNN updates are freely interchangeable
across clients. Figure~\ref{fig:obs3} shows that locally matched Encoder--GNN
pairs outperform cross-client mismatched combinations on average, revealing
substantial pairing sensitivity between the two modules. These observations
motivate explicitly coupling Encoder and GNN adaptations through a shared
state during local optimization, rather than parameterizing their updates
separately.

Based on these observations, we propose FedCORE, a federated multimodal graph
learning framework that represents coupled multimodal perception and graph
reasoning adaptation through a shared latent state. Rather than independently
maintaining Encoder and GNN adapters, FedCORE learns a compact core jointly
optimized on each client and decoded into module-specific low-rank updates for
multimodal Encoders, fusion layers, and GNNs. The shared core preserves the
relationship between extracted multimodal evidence and its propagation through
graph structures. During federation, only the shared core is communicated and
aggregated, maintaining learned perception--reasoning coordination across
clients while retaining communication efficiency.

\textbf{Our Contributions.}
(1) \textbf{\textit{\underline{New Insight.}}}
We reveal that Encoder and GNN adaptations in federated multimodal GFMs form a coupled optimization process rather than independent modules, motivating coordinated perception--reasoning adaptation.
(2) \textbf{\textit{\underline{New Framework.}}}
We propose FedCORE, which represents Encoder and GNN adaptations through a shared low-dimensional latent state, enabling coordinated optimization and aggregation-consistent federation with low communication overhead.
(3) \textbf{\textit{\underline{SOTA Performance.}}}
Experiments across multimodal graph benchmarks and downstream tasks show that FedCORE consistently outperforms strong
federated baselines, achieving an average improvement of \textbf{2.27\%} over the strongest non-FedCORE baseline.

\section{Related Work}
\label{sec:related_work}

\subsection{Multimodal Graph Foundation Models}

Multimodal graph foundation models combine pretrained multimodal encoders with
graph neural networks to jointly capture semantic information and relational
structures \citep{he2025unigraph2,fang2025graphgpt}. Recent studies explore
cross-modal interaction, topology-aware representation learning, and
structural grounding in multimodal graphs
\citep{ning2025graph4mm,liu2026toward}. These methods validate combining
multimodal perception with graph reasoning but primarily consider centralized
training, where components are jointly optimized with complete data. We
instead study adapting and coordinating such multimodal graph models under
federated settings.

\subsection{Federated Adaptation of Graph Foundation Models}

Federated graph foundation models transfer pretrained graph learning
capabilities across decentralized clients while preserving data locality.
Recent studies explore parameter-efficient adaptation and shared knowledge
modeling for improved cross-client transferability
\citep{zhu2026towards,wu2025fedbook,zhu2026fedgala,chen2026fedlab}. Meanwhile,
federated multimodal graph learning considers challenges from heterogeneous
modalities and decentralized graph structures, with recent methods developing
adaptive strategies for multimodal federated environments
\citep{li2026mm,zhang2026towards,chen2026stage,chen2026prism,zhu2026fedtcr,li2026fedgamma}.
However, existing approaches generally treat multimodal representation learning
and graph reasoning as independent adaptation components, overlooking that
updates can become coupled during local optimization and lose coordination
after federated aggregation. FedCORE addresses this limitation by modeling
Encoder and GNN adaptations through a shared latent state that preserves their
perception--reasoning coordination throughout federated learning.

\section{Empirical Study}
\label{sec:empirical}

We first diagnose why independently adapting and aggregating the
Encoder and GNN may be insufficient in federated multimodal GFMs.
 Our studies answer three questions: whether Encoder adaptation is necessary beyond graph-side optimization, whether Encoder adaptation depends on graph relations, and whether Encoder--GNN updates should be coordinated during federation. Detailed protocols and metric definitions are provided in Appendix~\ref{app:empirical_setup}.

\subsection{Can Stronger GNNs Replace Encoder Adaptation?}

\begin{figure}[htbp]
    \centering
    \includegraphics[width=0.9\linewidth]
    {figure/observation1\_toys\_qb\_gcn\_graphsage\_gat\_strong\_band.pdf}
    \vspace{-3pt}
    \setlength{\abovecaptionskip}{3pt}
    \setlength{\belowcaptionskip}{0pt}
    \caption{
    Observation 1: Stronger GNNs cannot replace Encoder adaptation.
    Performance of graph-only adaptation with increasing GNN capacity compared with Encoder-adapted models.
    }
    \label{fig:obs1}
\end{figure}
\textbf{Observation 1:}
\textit{Graph-side adaptation improves relational reasoning but cannot replace task-specific Encoder adaptation.}
We first examine whether stronger graph reasoning can compensate for a frozen multimodal Encoder. In Fig.~\ref{fig:obs1}, we gradually increase the trainable capacity of GCN, GraphSAGE, and GAT while keeping the Encoder fixed, and compare them with Encoder LoRA adaptation and full fine-tuning. Although larger GNN capacity consistently improves graph-only adaptation, it remains inferior to Encoder-adapted models. This indicates that graph reasoning cannot recover multimodal information that is not sufficiently captured by the frozen Encoder.

\subsection{Does Encoder Adaptation Depend on Relational Context?}

\begin{figure}[htbp]
    \centering
    \includegraphics[width=0.9\linewidth]
    {figure/observation3\_toys\_intervention\_left\_client3\_strong\_only.pdf}
    \caption{
    Observation 2: Relational context conditions multimodal representation adaptation.
    Encoder-LoRA update trajectories under graph interventions and semantic-preserving perturbations.
    }
    \label{fig:obs2}
\end{figure}

\textbf{Observation 2:} 
\textit{Multimodal representation adaptation is relationally conditioned rather than determined solely by modality contents.}
Given the necessity of Encoder adaptation, we further investigate whether it can be optimized independently from graph structures. We keep node features, labels, and splits unchanged while intervening only on graph topology. As shown in Fig.~\ref{fig:obs2}, stronger relational interventions induce increasingly different Encoder-LoRA update trajectories, whereas semantic-preserving perturbations introduce much smaller changes. The mean update distance reaches $0.088$ under the strongest intervention compared with $0.036$ under the low-semantic-shift control, demonstrating that Encoder adaptation is conditioned by the relational environment where representations are utilized.

\subsection{Are Encoder and GNN Adaptations Independent?}
\begin{figure}[htbp]
    \centering
    \includegraphics[width=0.95\linewidth]
    {figure/observation4\_toys\_pairing\_heatmaps\_refined\_purple\_orange\_fedcore\_custom.pdf}
    \caption{
    Observation 3: Independently learned module updates can exhibit
cross-client combination incompatibility.
    Cross-client Encoder--GNN pairing results on Toys.
    }
    \label{fig:obs3}
\end{figure}
\textbf{Observation 3.} \textit{Independently learned Encoder and GNN updates
can exhibit cross-client combination incompatibility.}
We examine whether independently learned Encoder and GNN updates remain
compatible across clients. We train local Encoder--GNN pairs and evaluate
cross-client combinations by replacing one module while fixing the other. As
shown in Fig.~\ref{fig:obs3}, matched Encoder--GNN pairs consistently
outperform mismatched combinations. Independent Joint LoRA suffers a pairing
gap of $30.6$, while module-wise aggregation reduces it to $22.8$. FedCORE
exhibits a substantially smaller pairing gap, demonstrating the importance of
preserving Encoder--GNN coordination during federation.

\section{Methodology}
\label{sec:method}

\subsection{Overview}

\begin{figure}[htbp]
    \centering
    \includegraphics[width=0.95\linewidth]{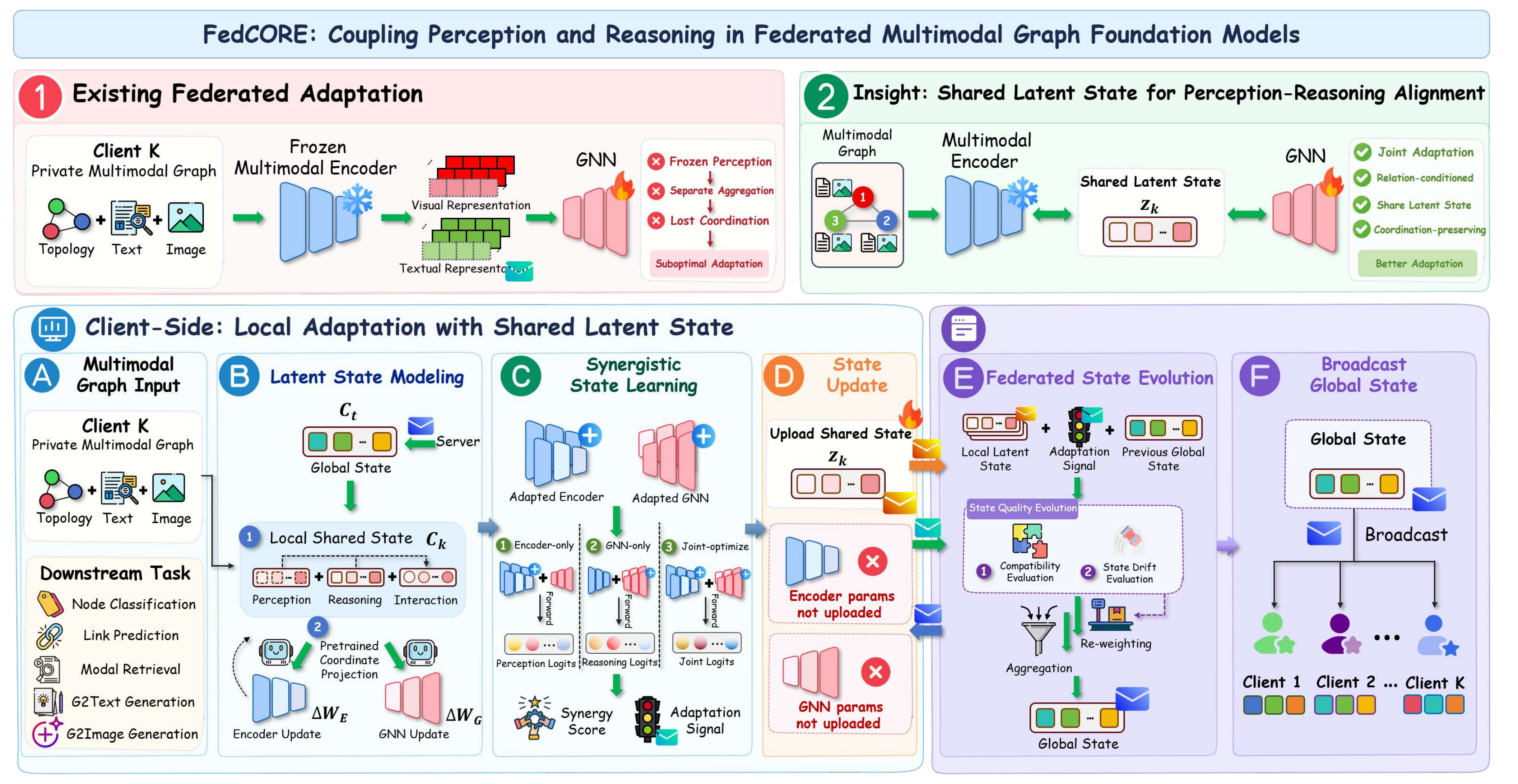}
    \caption{
    \textbf{Overview of FedCORE.}
    }
    \label{fig:framework}
\end{figure}

We present FedCORE, a latent-state adaptation framework for federated
multimodal graph foundation models. As shown in Fig.~\ref{fig:framework},
FedCORE has three components: (1) \emph{Latent State Modeling}, representing
heterogeneous Encoder--GNN adaptations via a shared state; (2)
\emph{Synergistic State Learning}, optimizing this state with task supervision
and perception--reasoning synergy; and (3) \emph{Federated State Evolution},
evolving local states across clients while preserving their coordination.
Problem formulation appears in Appendix~\ref{sec:problem}.

\subsection{Latent State Modeling}
\label{sec:state_modeling}

\textbf{Motivation.}
Obs.~2 and Obs.~3 show that Encoder and GNN adaptations are coupled: multimodal adaptation depends on relational context, while independently learned module
updates may become incompatible when combined across clients. FedCORE therefore models both through a shared latent state.

\textbf{Shared Latent State.}
Let $\mathcal J=\mathcal J_P\cup\mathcal J_R$ denote the adaptable parameter matrices, where $\mathcal J_P$ contains Encoder and fusion layers and $\mathcal J_R$ contains GNN layers. Each client $k$ maintains a matrix-valued state
$\mathbf C_k\in\mathbb R^{r\times r}$, where $r$ is the latent rank. Rather than assigning independent variables to different modules, $\mathbf C_k$ provides their common adaptation coordinate, implicitly encoding perception, reasoning, and their interaction.

\textbf{Pretrained Coordinate Construction.}
Because modules in $\mathcal J$ have heterogeneous dimensions, FedCORE constructs a fixed coordinate frame for each pretrained matrix
$\mathbf W_j^0\in\mathbb R^{d_j^{\mathrm{out}}\times d_j^{\mathrm{in}}}$
using its rank-$r$ truncated singular value decomposition:

\begin{equation}
\mathbf W_j^0
\approx
\mathbf U_j\boldsymbol{\Sigma}_j\mathbf V_j^\top,
\quad
\mathbf S_j
=
\left(
\frac{
\boldsymbol{\Sigma}_j+\epsilon\mathbf I_r
}{
\operatorname{tr}(\boldsymbol{\Sigma}_j)/r+\epsilon
}
\right)^{1/4},
\quad
\mathbf U_j^\top\mathbf U_j
=
\mathbf V_j^\top\mathbf V_j
=
\mathbf I_r .
\label{eq:pretrained_coordinate}
\end{equation}

Here,
$\mathbf U_j\in\mathbb R^{d_j^{\mathrm{out}}\times r}$,
$\mathbf V_j\in\mathbb R^{d_j^{\mathrm{in}}\times r}$,
and $\boldsymbol{\Sigma}_j,\mathbf S_j\in\mathbb R^{r\times r}$.
The constant $\epsilon>0$ ensures numerical stability. These coordinates are computed once from the pretrained model and remain frozen, leaving $\mathbf C_k$ as the client-specific adaptation variable.

\textbf{State-conditioned Adaptation Realization.}
FedCORE maps the shared state to each module through a fixed linear decoder
$\mathcal D_j:\mathbb R^{r\times r}\rightarrow
\mathbb R^{d_j^{\mathrm{out}}\times d_j^{\mathrm{in}}}$:

\begin{equation}
\Delta\mathbf W_{k,j}
=
\mathcal D_j(\mathbf C_k)
=
\alpha_j
\mathbf U_j
\mathbf S_j
\mathbf C_k
\mathbf S_j
\mathbf V_j^\top,
\quad
\alpha_j
=
\frac{
\|\mathbf W_j^0\|_F
}{
\sqrt{d_j^{\mathrm{out}}d_j^{\mathrm{in}}}+\epsilon
}.
\label{eq:state_decoder}
\end{equation}

Here, $\alpha_j$ compensates for scale differences across layers, and the adapted parameter is
$\mathbf W_{k,j}=\mathbf W_j^0+\Delta\mathbf W_{k,j}$.
Thus, heterogeneous Encoder and GNN updates retain their native parameter forms while sharing the same latent origin.

\textbf{Canonical State Recovery.}
The linear decoder admits an exact projection from each module space back to the shared state:

\begin{equation}
\mathcal P_j(\Delta\mathbf W)
=
\alpha_j^{-1}
\mathbf S_j^{-1}
\mathbf U_j^\top
\Delta\mathbf W
\mathbf V_j
\mathbf S_j^{-1},
\quad
\mathcal P_j\!\left(\mathcal D_j(\mathbf C_k)\right)
=
\mathbf C_k,
\quad
\forall j\in\mathcal J .
\label{eq:state_recovery}
\end{equation}

Hence,
$\mathcal P_i(\Delta\mathbf W_{k,i})
=
\mathcal P_j(\Delta\mathbf W_{k,j})
=
\mathbf C_k$
for any $i,j\in\mathcal J$, ensuring a common latent origin by construction.

\subsection{Synergistic State Learning}
\label{sec:state_learning}

\textbf{Motivation.}
The shared state defines how heterogeneous adaptations are represented, but must be optimized for the local task. FedCORE therefore learns $\mathbf C_k$ using downstream supervision together with perception--reasoning synergy.

\textbf{State-guided Local Adaptation.}
At round $t$, the server broadcasts the global state $\mathbf C^t$, and each participating client initializes
$\mathbf C_k^{t,0}=\mathbf C^t$.
Let $\mathcal D_P(\mathbf C_k)$ and $\mathcal D_R(\mathbf C_k)$ denote the decoded updates for modules in $\mathcal J_P$ and $\mathcal J_R$, respectively. Given local features $\mathbf X_k$, graph structure $\mathbf A_k$, labels $\mathbf Y_k$, and task head $H_{\Omega_k}$, the joint prediction is

\begin{equation}
\widehat{\mathbf Y}_k^{\,\mathrm{joint}}
=
H_{\Omega_k}
\left(
G_{\Phi_0+\mathcal D_R(\mathbf C_k)}
\left(
E_{\Theta_0+\mathcal D_P(\mathbf C_k)}
(\mathbf X_k),
\mathbf A_k
\right)
\right),
\label{eq:state_prediction}
\end{equation}

where $\Theta_0$ and $\Phi_0$ denote the pretrained Encoder-side and GNN-side parameters. Local adaptation is therefore entirely governed by the shared state rather than independent module adapters.

\textbf{Perception--Reasoning Synergy Learning.}
To evaluate whether joint adaptation provides additional benefit, we construct perception-only and reasoning-only counterfactual forward passes:

\begin{equation}
\begin{aligned}
\widehat{\mathbf Y}_k^{\,p}
&=
H_{\Omega_k}
\left(
G_{\Phi_0}
\left(
E_{\Theta_0+\mathcal D_P(\mathbf C_k)}(\mathbf X_k),
\mathbf A_k
\right)
\right),
\widehat{\mathbf Y}_k^{\,r}
=
H_{\Omega_k}
\left(
G_{\Phi_0+\mathcal D_R(\mathbf C_k)}
\left(
E_{\Theta_0}(\mathbf X_k),
\mathbf A_k
\right)
\right).
\end{aligned}
\label{eq:counterfactual_prediction}
\end{equation}

Let $\ell(\widehat{\mathbf Y},\mathbf Y)$ denote the task loss, yielding
$\ell_k^{\mathrm{joint}}$, $\ell_k^{p}$, and $\ell_k^{r}$ for the three predictions.
To prevent synergy from being artificially increased by degrading isolated branches, FedCORE treats their losses as detached references. With $\operatorname{sg}(\cdot)$ denoting stop-gradient and $[x]_+=\max(x,0)$,

\begin{equation}
b_k
=
\operatorname{sg}
\left(
\min\{\ell_k^{p},\ell_k^{r}\}
\right),
\quad
\mathcal L_k^{\mathrm{syn}}
=
\left[
m+\ell_k^{\mathrm{joint}}-b_k
\right]_+,
\quad
\mathcal L_k
=
\ell_k^{\mathrm{joint}}
+
\lambda_c\mathcal L_k^{\mathrm{syn}}
+
\frac{\lambda_r}{2}
\|\mathbf C_k\|_F^2 ,
\label{eq:synergy_objective}
\end{equation}

where $m\geq0$ is the synergy margin and $\lambda_c,\lambda_r\geq0$ control synergy and state regularization. This objective encourages joint adaptation to outperform the stronger isolated branch without degrading either reference branch.

\textbf{Latent Signal Integration.}
Since module gradients lie in heterogeneous parameter spaces, FedCORE maps them into the shared state space. For module $j$, let
$\mathbf G_{k,j}^{W}
=
\nabla_{\mathbf W_{k,j}}\mathcal L_k$.
The adjoint of the decoder gives

\begin{equation}
\mathbf g_{k,j}
=
\mathcal D_j^{*}
\left(
\mathbf G_{k,j}^{W}
\right)
=
\alpha_j
\mathbf S_j
\mathbf U_j^\top
\mathbf G_{k,j}^{W}
\mathbf V_j
\mathbf S_j,
\quad
\mathbf g_k^{p}
=
\sum_{j\in\mathcal J_P}\mathbf g_{k,j},
\quad
\mathbf g_k^{r}
=
\sum_{j\in\mathcal J_R}\mathbf g_{k,j}.
\label{eq:latent_gradient}
\end{equation}

The resulting state gradient and local update are

\begin{equation}
\mathbf{g}_k^e
=
\mathbf{g}_{k}^{p,e}
+
\mathbf{g}_{k}^{r,e}
+
\lambda_r\mathbf{C}_k^e,
\qquad
\mathbf{C}_k^{t,e+1}
=
\mathbf{C}_k^{t,e}
-
\eta
\mathbf{P}_{L,k}^e
\mathbf{g}_k^e
\mathbf{P}_{R,k}^e.
\tag{8}
\end{equation}

where $e\in\{0,\ldots,E-1\}$ indexes the $E$ local steps and $\eta>0$ is the learning rate. Perception and reasoning signals thus jointly update a single client state. The preconditioners and their local second-moment statistics are defined in
Appendix~\ref{app:theory}.

\subsection{Federated State Evolution}
\label{sec:evolution}

\textbf{Motivation.}
Local training produces client-specific states encoding coordinated perception--reasoning adaptation. FedCORE evolves the shared states directly, thereby enforcing a common parameterization for the decoded Encoder and GNN updates.

\textbf{State Quality-aware Evolution.}
After $E$ local steps, client $k$ obtains $\mathbf C_k^{t,E}$. We summarize its perception--reasoning coordination by averaging the latent signals along the local trajectory:

\begin{equation}
\overline{\mathbf g}_k^{p}
=
\frac{1}{E}
\sum_{e=0}^{E-1}
\mathbf g_{k,e}^{p},
\quad
\overline{\mathbf g}_k^{r}
=
\frac{1}{E}
\sum_{e=0}^{E-1}
\mathbf g_{k,e}^{r},
\quad
\Gamma_k
=
\frac{
\left\langle
\overline{\mathbf g}_k^{p},
\overline{\mathbf g}_k^{r}
\right\rangle_F
}{
\|
\overline{\mathbf g}_k^{p}
\|_F
\|
\overline{\mathbf g}_k^{r}
\|_F+\epsilon
}.
\label{eq:coordination_score}
\end{equation}

Here,
$\langle\mathbf A,\mathbf B\rangle_F=\operatorname{tr}(\mathbf A^\top\mathbf B)$
is the Frobenius inner product, and larger $\Gamma_k$ indicates stronger perception--reasoning agreement. Each client uploads only $\mathbf C_k^{t,E}$ and the scalar $\Gamma_k$; module parameters and gradients remain local.

The server combines this coordination score with normalized state drift to determine the evolution weight:

\begin{equation}
D_k^t
=
\frac{
\|
\mathbf C_k^{t,E}-\mathbf C^t
\|_F^2
}{
\|
\mathbf C^t
\|_F^2+\epsilon
},
\qquad
\omega_k^t
=
\frac{
p_k^t
\exp\!\left(
\tau\Gamma_k-\xi D_k^t
\right)
}{
\sum_{i\in\mathcal S_t}
p_i^t
\exp\!\left(
\tau\Gamma_i-\xi D_i^t
\right)
},
\quad
p_k^t
=
\frac{n_k}{\sum_{i\in\mathcal S_t}n_i},
\label{eq:evolution_weight}
\end{equation}

where $\mathcal S_t$ denotes participating clients, $n_k$ is the number of local training instances, and $\tau,\xi\geq0$ control coordination and drift. This weighting favors coordinated states while suppressing excessive deviation from the previous global state.

\textbf{Latent State Evolution.}
FedCORE combines the quality-aware client states with the previous global state:

\begin{equation}
\mathbf C^{t+1}
=
(1-\mu)
\sum_{k\in\mathcal S_t}
\omega_k^t
\mathbf C_k^{t,E}
+
\mu\mathbf C^t,
\qquad
\mathcal D_j(\mathbf C^{t+1})
=
(1-\mu)
\sum_{k\in\mathcal S_t}
\omega_k^t
\mathcal D_j(\mathbf C_k^{t,E})
+
\mu\mathcal D_j(\mathbf C^t),
\label{eq:global_state_update}
\end{equation}

where $\mu\in[0,1)$ controls temporal persistence. By linearity of $\mathcal D_j$, state evolution remains closed under the coupled adaptation family, so all global module updates continue to originate from the same $\mathbf C^{t+1}$. The updated state is then broadcast for the next communication round.

\section{Experiments}
\label{sec:experiments}

We conduct comprehensive experiments to evaluate FedCORE and address four
research questions. \textbf{Q1: Overall Performance.} Does FedCORE consistently
outperform existing federated multimodal graph learning methods across
downstream tasks? \textbf{Q2: Ablation Study.} How does each design contribute
to FedCORE's effectiveness? \textbf{Q3: Robustness.} Does FedCORE remain
effective under stronger client heterogeneity and structural perturbations?
\textbf{Q4: Efficiency and Scalability.} Can FedCORE achieve these gains with
reasonable communication, computation, and memory overhead?

\subsection{Experimental Setup}

We evaluate FedCORE on multimodal graph benchmarks spanning node classification,
link prediction, multimodal retrieval, graph-to-text generation, and
graph-to-image generation. Toys and Grocery derive from Amazon review data,
whereas Bili Music, DY, QB, and Bili Cartoon come from NineRec
\citep{zhang2024ninerec}. Following MM-OpenFGL \citep{li2026mm}, we
compare FedCORE with representative federated, federated graph foundation, and
multimodal federated graph methods, including FedAvg
\citep{mcmahan2017communication}, FedGFM+ \citep{zhu2026towards}, STAGE
\citep{chen2026stage}, FedGAMMA \citep{li2026fedgamma}, and federated PLANET
\citep{liu2026toward}, plus two adaptation controls: Frozen Encoder + GNN and
Independent Joint LoRA \citep{hu2021lora}. All methods share client
partitions, pretrained backbones, and evaluation protocols. Detailed settings
appear in Appendix~\ref{app:setup}.

\subsection{Overall Performance (Q1)}

\begin{table*}[htbp]
    \centering
    \caption{Overall performance comparison. Best results are in 
    \colorbox[HTML]{DADADA}{\textbf{bold}} and second-best are \underline{underlined}.}
    \vspace{-2pt}

    \scriptsize
    \setlength{\tabcolsep}{3.5pt}
    \renewcommand{\arraystretch}{1.05}
    \begin{tabular}{l|c c | c c | c c | c | c}

    \specialrule{1.5pt}{1.5pt}{1.5pt}

    \multirow{2}{*}{\textbf{Method}}
    & \multicolumn{2}{c|}{\begin{tabular}{@{}c@{}}\textbf{Node Classification} \\ \textbf{(Acc)}\end{tabular}}
    & \multicolumn{2}{c|}{\begin{tabular}{@{}c@{}}\textbf{Link Prediction} \\ \textbf{(AUC)}\end{tabular}}
    & \multicolumn{2}{c|}{\begin{tabular}{@{}c@{}}\textbf{Modal Retrieval} \\ \textbf{(R@5)}\end{tabular}}
    & \begin{tabular}{@{}c@{}}\textbf{G2Text} \\ \textbf{(ROUGE-L)}\end{tabular}
    & \begin{tabular}{@{}c@{}}\textbf{G2Image} \\ \textbf{(CLIP-S)}\end{tabular} \\

    \cmidrule(lr){2-3}
    \cmidrule(lr){4-5}
    \cmidrule(lr){6-7}
    \cmidrule(lr){8-8}
    \cmidrule(lr){9-9}

    & \textbf{Toys} 
    & \textbf{Grocery} 
    & \textbf{Bili Music} 
    & \textbf{DY} 
    & \textbf{QB} 
    & \textbf{Bili Cartoon} 
    & \textbf{Flickr30k} 
    & \textbf{SemArt} \\

    \midrule

    FedGAMMA &
    $76.37_{\scriptstyle \pm 0.65}$ &
    $73.73_{\scriptstyle \pm 0.54}$ &
    $75.87_{\scriptstyle \pm 0.91}$ &
    $72.25_{\scriptstyle \pm 0.26}$ &
    $86.52_{\scriptstyle \pm 2.44}$ &
    $80.22_{\scriptstyle \pm 0.78}$ &
    $31.29_{\scriptstyle \pm 0.09}$ &
    $70.87_{\scriptstyle \pm 0.23}$ \\

    STAGE &
    $79.19_{\scriptstyle \pm 0.02}$ &
    $81.04_{\scriptstyle \pm 0.42}$ &
    $69.41_{\scriptstyle \pm 1.74}$ &
    $69.19_{\scriptstyle \pm 0.49}$ &
    \underline{$94.91_{\scriptstyle \pm 3.65}$} &
    $79.77_{\scriptstyle \pm 1.39}$ &
    $30.87_{\scriptstyle \pm 0.47}$ &
    \underline{$71.86_{\scriptstyle \pm 0.53}$} \\

    FedGFM+ &
    $80.21_{\scriptstyle \pm 0.28}$ &
    $83.76_{\scriptstyle \pm 0.15}$ &
    \underline{$76.24_{\scriptstyle \pm 0.02}$} &
    \underline{$74.30_{\scriptstyle \pm 0.11}$} &
    $92.06_{\scriptstyle \pm 0.83}$ &
    \underline{$84.57_{\scriptstyle \pm 4.38}$} &
    $31.09_{\scriptstyle \pm 0.31}$ &
    $71.58_{\scriptstyle \pm 0.26}$ \\

    Fed-PLANET &
    \underline{$80.58_{\scriptstyle \pm 0.07}$} &
    \underline{$83.93_{\scriptstyle \pm 0.48}$} &
    $65.29_{\scriptstyle \pm 0.90}$ &
    $64.08_{\scriptstyle \pm 0.70}$ &
    $89.89_{\scriptstyle \pm 2.64}$ &
    $81.20_{\scriptstyle \pm 5.54}$ &
    \underline{$31.35_{\scriptstyle \pm 0.23}$} &
    $71.38_{\scriptstyle \pm 0.21}$ \\

    \shortstack[l]{Frozen Enc-GNN} &
    $78.93_{\scriptstyle \pm 0.14}$ &
    $80.63_{\scriptstyle \pm 0.08}$ &
    $74.50_{\scriptstyle \pm 0.70}$ &
    $72.60_{\scriptstyle \pm 0.21}$ &
    $86.45_{\scriptstyle \pm 2.56}$ &
    $76.66_{\scriptstyle \pm 5.00}$ &
    $31.07_{\scriptstyle \pm 0.35}$ &
    $70.55_{\scriptstyle \pm 0.09}$ \\

    \shortstack[l]{Joint LoRA} &
    $78.72_{\scriptstyle \pm 0.04}$ &
    $80.59_{\scriptstyle \pm 0.55}$ &
    $73.64_{\scriptstyle \pm 0.69}$ &
    $72.41_{\scriptstyle \pm 1.64}$ &
    $80.49_{\scriptstyle \pm 1.95}$ &
    $81.49_{\scriptstyle \pm 2.27}$ &
    $30.53_{\scriptstyle \pm 0.07}$ &
    $70.21_{\scriptstyle \pm 0.12}$ \\

    \cellcolor[HTML]{DADADA}\textbf{Ours} &
    \cellcolor[HTML]{DADADA}$\textbf{81.72}_{\scriptstyle \pm \textbf{0.10}}$ &
    \cellcolor[HTML]{DADADA}$\textbf{84.68}_{\scriptstyle \pm \textbf{0.18}}$ &
    \cellcolor[HTML]{DADADA}$\textbf{79.50}_{\scriptstyle \pm \textbf{0.94}}$ &
    \cellcolor[HTML]{DADADA}$\textbf{77.69}_{\scriptstyle \pm \textbf{0.32}}$ &
    \cellcolor[HTML]{DADADA}$\textbf{95.05}_{\scriptstyle \pm \textbf{1.61}}$ &
    \cellcolor[HTML]{DADADA}$\textbf{90.73}_{\scriptstyle \pm \textbf{2.34}}$ &
    \cellcolor[HTML]{DADADA}$\textbf{33.31}_{\scriptstyle \pm \textbf{0.59}}$ &
    \cellcolor[HTML]{DADADA}$\textbf{73.24}_{\scriptstyle \pm \textbf{0.27}}$ \\

    \specialrule{1.3pt}{2.0pt}{1.0pt}

    \end{tabular}

    \vspace{-3pt}
    \label{table:overall}
\end{table*}

For Q1, we compare FedCORE with representative federated graph, multimodal
graph, and GFM baselines across diverse tasks. Table~\ref{table:overall} shows
that FedCORE consistently performs best, exceeding the strongest non-FedCORE
baseline by 2.27\% on average and up to 6.16\% on Bili Cartoon. FedCORE also
surpasses Independent Joint LoRA by 5.98\% on average. These results show that
the complete FedCORE parameterization outperforms independently maintained
Encoder and GNN adapters in our implementation.

\subsection{Ablation Study (Q2)}
\label{sec:ablation}

\begin{table*}[htbp]
\centering

\caption{Ablation study.
Best results are in \colorbox[HTML]{DADADA}{\textbf{bold}} and second-best
results are \underline{underlined}.}
\label{tab:ablation}

\vspace{-2pt}

\scriptsize
\setlength{\tabcolsep}{4.5pt}
\renewcommand{\arraystretch}{1.05}

\newcolumntype{M}[1]{%
  >{\footnotesize\raggedright\arraybackslash}m{#1}
}

\begin{tabular}{M{4.6cm}|c|c|c|c|c}

\specialrule{1.5pt}{1.5pt}{1.5pt}

\multirow{2}{*}{\small\textbf{Variant}}
&
\begin{tabular}{@{}c@{}}
\textbf{Node Classification}\\
\textbf{(Acc)}
\end{tabular}
&
\begin{tabular}{@{}c@{}}
\textbf{Link Prediction}\\
\textbf{(AUC)}
\end{tabular}
&
\begin{tabular}{@{}c@{}}
\textbf{Modal Retrieval}\\
\textbf{(R@5)}
\end{tabular}
&
\begin{tabular}{@{}c@{}}
\textbf{G2Text}\\
\textbf{(ROUGE-L)}
\end{tabular}
&
\begin{tabular}{@{}c@{}}
\textbf{G2Image}\\
\textbf{(CLIP-S)}
\end{tabular}
\\

\cmidrule(lr){2-2}
\cmidrule(lr){3-3}
\cmidrule(lr){4-4}
\cmidrule(lr){5-5}
\cmidrule(lr){6-6}

&
\textbf{Toys}
&
\textbf{DY}
&
\textbf{QB}
&
\textbf{Flickr30k}
&
\textbf{SemArt}
\\

\midrule

{\footnotesize\textbf{Full FedCORE}}
&
\cellcolor[HTML]{DADADA}
$\textbf{81.67}_{\scriptstyle \pm \textbf{0.05}}$
&
\cellcolor[HTML]{DADADA}
$\textbf{77.72}_{\scriptstyle \pm \textbf{0.32}}$
&
\cellcolor[HTML]{DADADA}
$\textbf{95.28}_{\scriptstyle \pm \textbf{0.61}}$
&
\cellcolor[HTML]{DADADA}
$\textbf{33.21}_{\scriptstyle \pm \textbf{0.76}}$
&
\cellcolor[HTML]{DADADA}
$\textbf{73.26}_{\scriptstyle \pm \textbf{0.21}}$
\\

\footnotesize w/o Shared Latent State
&
$79.96_{\scriptstyle \pm 0.21}$
&
$74.86_{\scriptstyle \pm 0.43}$
&
$92.94_{\scriptstyle \pm 0.46}$
&
$30.71_{\scriptstyle \pm 0.94}$
&
$71.75_{\scriptstyle \pm 0.36}$
\\

\footnotesize w/o SVD Bases
&
$80.87_{\scriptstyle \pm 0.12}$
&
$75.93_{\scriptstyle \pm 0.32}$
&
$94.18_{\scriptstyle \pm 0.73}$
&
$32.48_{\scriptstyle \pm 0.85}$
&
$72.59_{\scriptstyle \pm 0.29}$
\\

\makebox[5.4cm][l]{\footnotesize w/o Geometry-Balanced Update}
&
\underline{$81.24_{\scriptstyle \pm 0.08}$}
&
$76.31_{\scriptstyle \pm 0.29}$
&
$94.76_{\scriptstyle \pm 0.66}$
&
\underline{$32.91_{\scriptstyle \pm 0.75}$}
&
\underline{$72.94_{\scriptstyle \pm 0.24}$}
\\

\footnotesize Indep. Adapters + FedAvg
&
$80.51_{\scriptstyle \pm 0.06}$
&
\underline{$76.66_{\scriptstyle \pm 0.27}$}
&
\underline{$95.05_{\scriptstyle \pm 0.15}$}
&
$32.19_{\scriptstyle \pm 0.40}$
&
$72.24_{\scriptstyle \pm 0.21}$
\\

\specialrule{1.3pt}{2.0pt}{1.0pt}

\end{tabular}

\vspace{-3pt}

\end{table*}
To answer Q2, we ablate FedCORE's key designs in Table~\ref{tab:ablation}.
We replace the shared latent state with independent Encoder--GNN adapters,
remove the fixed SVD bases, replace the geometry-balanced update with ordinary
gradient descent, and replace shared core aggregation with independent adapter
aggregation. These variants assess the contributions of shared adaptation
representation, coordinate transport, geometry-balanced optimization, and
aggregation-consistent federation.

\begin{figure*}[htbp]
    \centering
    \includegraphics[width=0.95\linewidth]
    {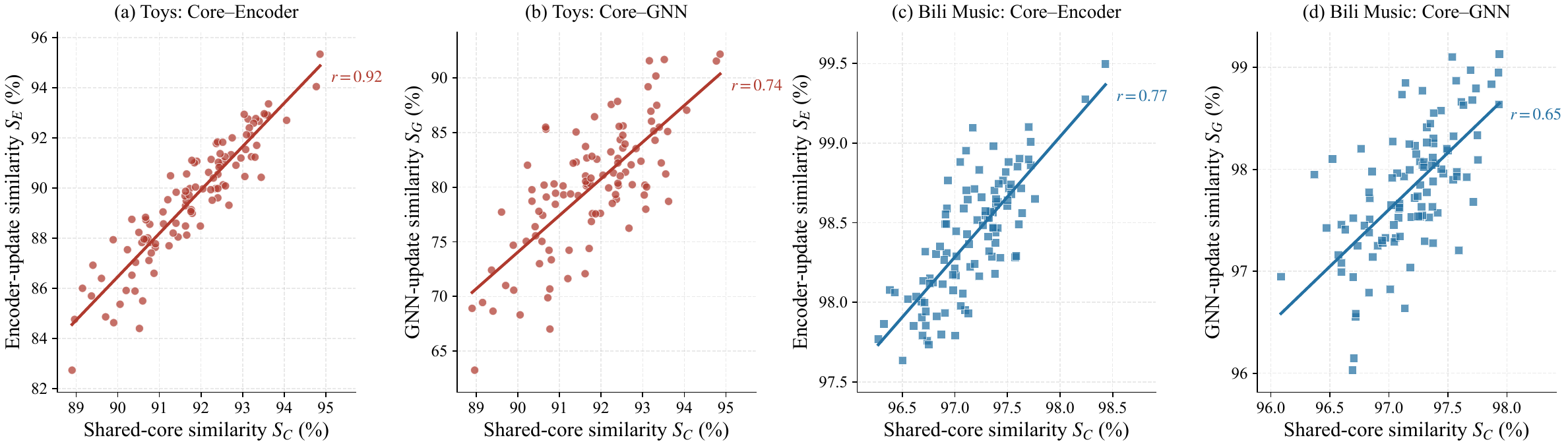}
    \caption{
        \textbf{Relational core--update correspondence.}
Each point is one client pair. The x-axis shows shared-core similarity, and the y-axis shows Encoder/GNN update similarity.
    }
    \label{fig:core_correspondence}
\end{figure*}

Building on these
ablations, we further test whether the learned shared core captures a common
perception--reasoning adaptation state. Fig.~\ref{fig:core_correspondence}
measures the relationship between pairwise shared-core similarity and decoded
Encoder and GNN update similarity on Toys and Bili Music. Positive correlations
indicate that clients with similar shared cores tend to receive similar
perception and reasoning adaptations.

\subsection{Robustness Analysis (Q3)}
\label{sec:robustness}

To answer Q3, we evaluate FedCORE under stronger client heterogeneity, graph fragmentation, hyperparameter variations, and imperfect multimodal inputs.

\begin{figure*}[htbp]
\centering

\begin{minipage}[t]{0.5\textwidth}
\centering
\includegraphics[width=\linewidth]
{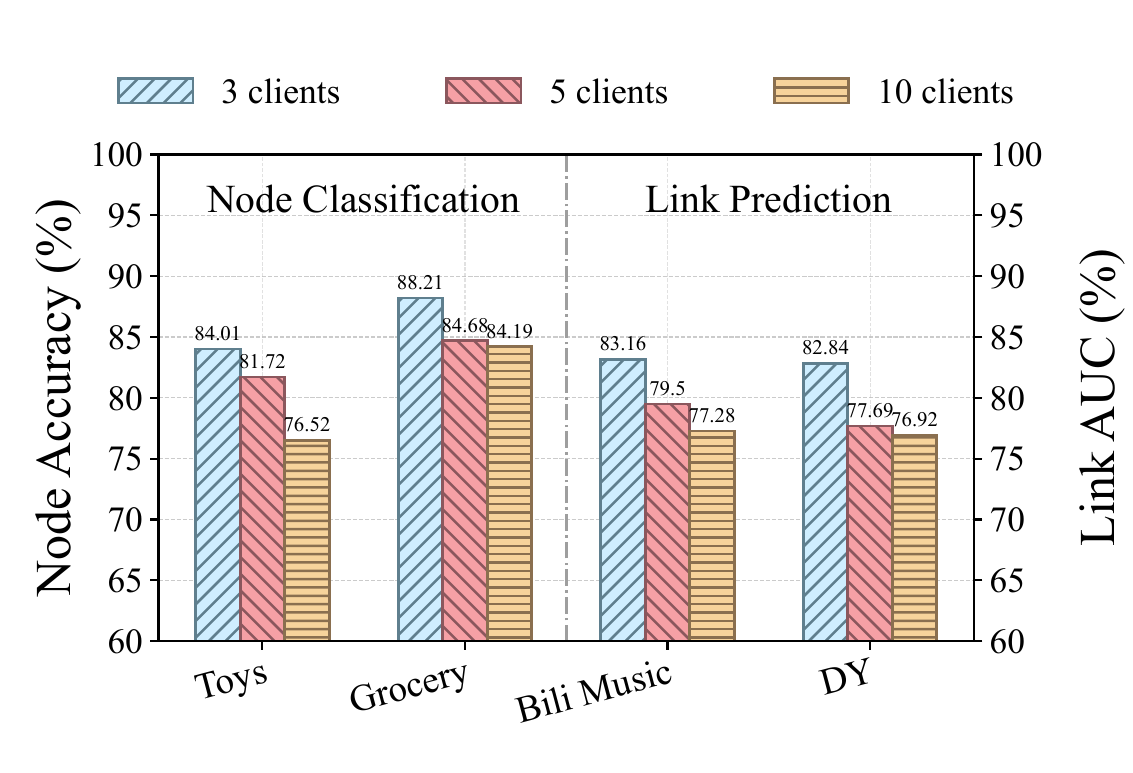}\\[-2pt]
\small (a) Client fragmentation.
\end{minipage}
\hfill
\begin{minipage}[t]{0.4\textwidth}
\centering
\includegraphics[width=\linewidth]
{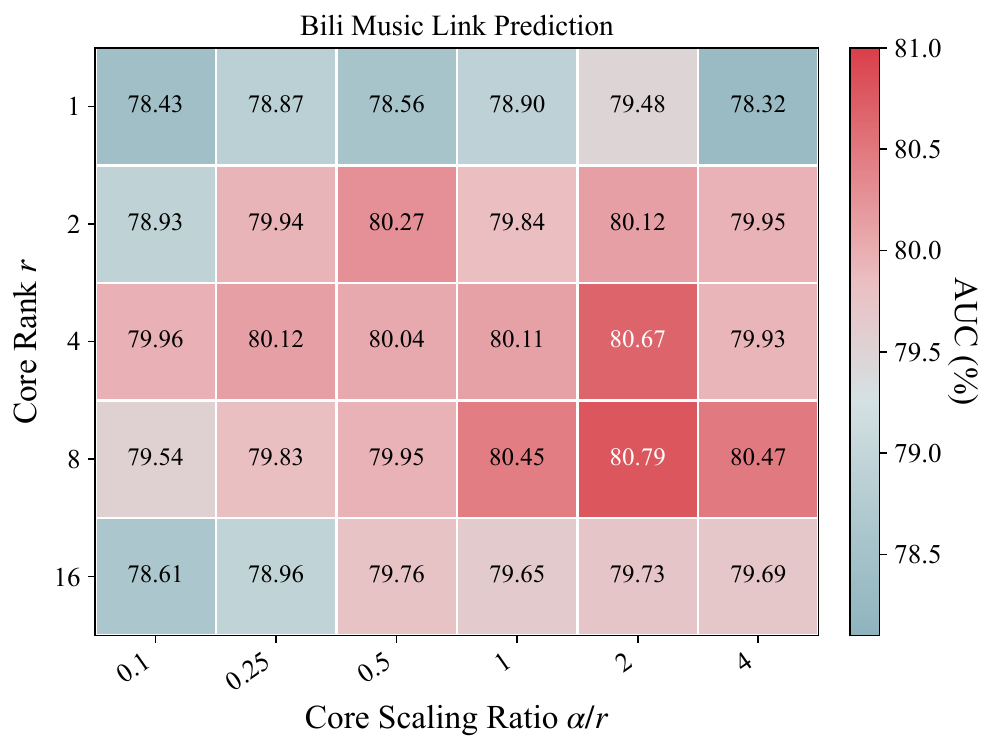}\\[-2pt]
\small (b) Hyperparameter sensitivity.
\end{minipage}

\vspace{-4pt}
\caption{
\textbf{Robustness to client fragmentation and hyperparameter choices.}
}
\label{fig:robustness_scalability_sensitivity}
\end{figure*}
\textbf{Scalability to Client Fragmentation.}
We vary the number of clients as $K\in\{3,5,10\}$ while keeping the complete dataset fixed. Larger $K$ leads to smaller and more fragmented local graphs, making local adaptation and global coordination more difficult. Fig.~\ref{fig:robustness_scalability_sensitivity}(a) reports the performance of FedCORE on representative node classification and link prediction tasks.

\textbf{Hyperparameter Sensitivity.}
We study the latent state rank $r$ and the core scaling ratio $\alpha/r$, which control the capacity and magnitude of the shared latent state. Fig.~\ref{fig:robustness_scalability_sensitivity}(b) presents the sensitivity heatmap on Bili Music link prediction. The broad high-performing region indicates that FedCORE is not sensitive to precise hyperparameter choices.

\textbf{Robustness to Modality Corruption and Missing Modalities.}
We further evaluate robustness to degraded modality information on Grocery node classification. Fig.~\ref{fig:modality_robustness} reports performance as the corruption and missing-modality ratios increase, examining whether coupled Encoder--GNN adaptation can better exploit the remaining semantic and relational information.

\begin{figure}[htbp]
    \centering
    \includegraphics[width=\linewidth]
    {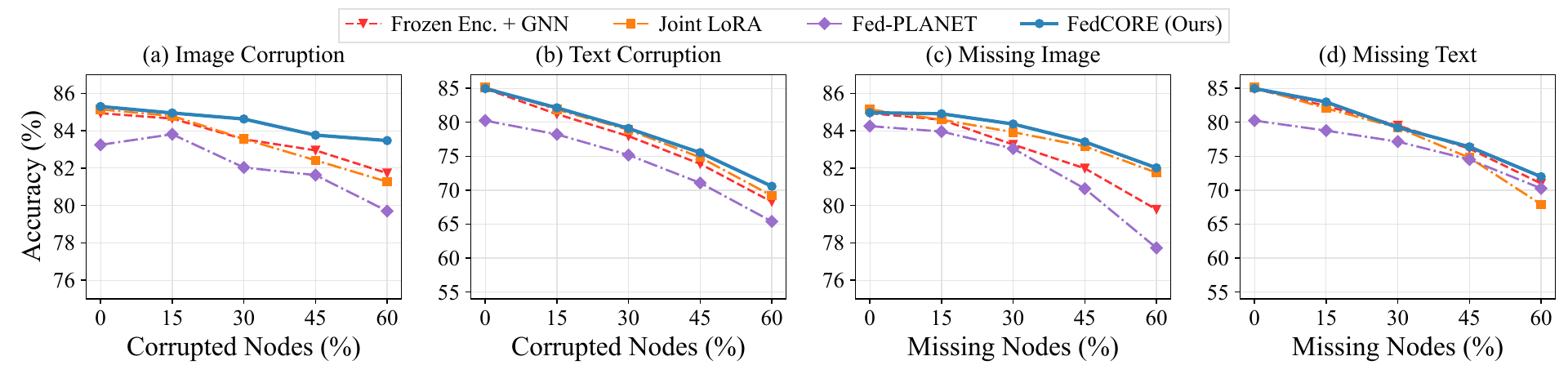}
    
    \caption{
        \textbf{Robustness to modality corruption and missing modalities.}
    }
    \label{fig:modality_robustness}
\end{figure}

We also evaluate FedCORE under imperfect multimodal inputs and client
heterogeneity. Additional results on controlled feature and topology
heterogeneity are provided in Appendix~\ref{app:additional_robustness}.

\subsection{Efficiency and Scalability Analysis (Q4)}
\label{sec:efficiency}

To answer Q4, we evaluate whether FedCORE achieves coupled adaptation with reasonable system overhead and remains scalable as more model components are adapted.

\textbf{Communication and Resource Efficiency.}
Table~\ref{tab:efficiency} compares FedCORE with representative baselines in
trainable parameters, communication payload, FLOPs, GPU memory, and training
time. We count one upload and one download of federated adaptation parameters
per participating client per round. FedCORE communicates
$5.12\times10^{2}$ scalars per round, compared with
$4.64\times10^{5}$ scalars for Independent Joint LoRA, a 99.89\% reduction in
per-round communication. Appendix~\ref{app:efficiency_scalability} reports
additional convergence and adaptation-scale analyses.

\begin{table*}[htbp]
\centering
\caption{Communication and resource efficiency of FedCORE and baseline methods.}
\label{tab:efficiency}
\renewcommand{\arraystretch}{1.15}
\setlength{\tabcolsep}{6pt}

\resizebox{\textwidth}{!}{%
\begin{tabular}{lccccc}
\toprule
\multicolumn{1}{c}{\textbf{Method}}
& \makecell{\textbf{Trainable}\\\textbf{Params.}}
& \makecell{\textbf{Comm. Payload}\\\textbf{(Scalars/Round)}}
& \makecell{\textbf{Total Ops}\\\textbf{(FLOPs)}}
& \makecell{\textbf{Space Cost}\\\textbf{(MB)}}
& \makecell{\textbf{Time}\\\textbf{(s/round)}} \\
\midrule

FedGAMMA
& $2.32 \times 10^{5}$
& $4.64 \times 10^{5}$
& $2.69 \times 10^{11}$
& $\mathbf{242.0}$
& $2.933$ \\

STAGE
& $2.49 \times 10^{5}$
& $4.98 \times 10^{5}$
& $\mathbf{2.69 \times 10^{10}}$
& $469.6$
& $1.795$ \\

Fed-PLANET
& $8.43 \times 10^{5}$
& $1.686 \times 10^{6}$
& $2.73 \times 10^{11}$
& $528.0$
& $2.088$ \\

Frozen Encoder + Trainable GNN + FedAvg
& $2.49 \times 10^{5}$
& $4.98 \times 10^{5}$
& $\mathbf{2.69 \times 10^{10}}$
& $443.6$
& $\mathbf{0.676}$ \\

Independent Joint LoRA + FedAvg
& $2.32 \times 10^{5}$
& $4.64 \times 10^{5}$
& $2.90 \times 10^{10}$
& $438.8$
& $0.752$ \\

\rowcolor{gray!15}
FedCORE (Ours)
& $\mathbf{2.56 \times 10^{2}}$
& $\mathbf{5.12 \times 10^{2}}$
& $3.12 \times 10^{10}$
& $441.3$
& $1.155$ \\

\bottomrule
\end{tabular}%
}
\end{table*}

\section{Conclusion}
\label{sec:conclusion}

In this work, we revisit federated adaptation of multimodal graph foundation models and show that Encoder adaptation is necessary, sensitive to graph relations, and coupled with GNN adaptation. Based on these findings, we propose FedCORE, which models perception and graph reasoning through a shared latent state and preserves their coordination during local optimization and federated aggregation. Experiments across diverse multimodal graph tasks demonstrate that FedCORE consistently outperforms frozen-Encoder and independent adaptation strategies, highlighting the importance of coordinated perception--reasoning adaptation in federated multimodal GFMs.

\subsection*{AI use statement}

Generative AI tools were used during the preparation of this manuscript for limited purposes, including language refinement, literature retrieval and discovery, and research ideation support. Specifically, AI tools assisted with improving writing clarity, organizing related work, exploring potential research directions, and discussing possible methodological designs and experimental considerations.

\subsection*{Reproducibility Statement}

We provide an anonymized reference implementation, experiment
configurations, and reproducibility checks at
\url{https://anonymous.4open.science/r/FedCORE-A34E/}.
The main paper describes the empirical diagnostic protocols, the
FedCORE parameterization and optimization procedure, and the
evaluation settings and results. The appendix provides additional
protocol details, theoretical derivations, robustness analyses, and
implementation settings. The released code excludes datasets and
pretrained checkpoints, but documents the required data format,
external model dependencies, and instructions for reproducing the
reported experimental settings.

\bibliography{iclr2027_conference}
\bibliographystyle{iclr2027_conference}

\appendix
\section{Appendix}
\subsection{Problem Formulation}
\label{sec:problem}

We consider federated multimodal graph foundation models consisting of a pretrained multimodal Encoder $E_{\Theta}$ and a GNN $G_{\Phi}$. Each client $k$ owns a private multimodal graph $\mathcal G_k=(\{\mathbf X_k^m\}_{m\in\mathcal M},\mathbf A_k,\mathbf Y_k)$, where $\mathcal M$ denotes the modality set, $\mathbf A_k$ the graph structure, and $\mathbf Y_k$ the task labels. Raw data remain local, and downstream prediction is performed as
\begin{equation}
\widehat{\mathbf Y}_k
=
H_{\Omega_k}\!\left(
G_{\Phi}\!\left(
E_{\Theta}(\{\mathbf X_k^m\}_{m\in\mathcal M}),
\mathbf A_k
\right)\right).
\end{equation}
Conventional federated adaptation aggregates Encoder and GNN updates separately across clients, without explicitly preserving their locally learned coordination. We therefore study how to jointly represent and federatively evolve perception--reasoning adaptations while keeping client privacy.

\subsection{Structural Randomization Analysis}
\label{app:structural_randomization}

To complement the parameter-level analysis in Sec.~\ref{sec:empirical}, we further study how the methods behave when graph relations are gradually corrupted. We construct graphs with different structural integrity ratios $\rho$, ranging from a fully randomized graph ($\rho=0$) to the original graph ($\rho=1$), while keeping node contents and supervision unchanged. This type of community-based subgraph construction is consistent with
standard subgraph federated learning protocols
\citep{wang2022federatedscope,zhang2021subgraph}.

\begin{figure}[t]
    \centering
    \includegraphics[width=\linewidth]
    {figure/observation2\_structural\_integrity\_fixed\_clients\_absolute\_accuracy.pdf}
    \caption{
    \textbf{Performance under progressive structural randomization.}
    Absolute accuracy of Frozen Encoder, Independent Joint LoRA, and FedCORE as the structural integrity ratio $\rho$ increases from a randomized graph to the original graph.
    }
    \label{fig:structural_randomization}
\end{figure}

As shown in Fig.~\ref{fig:structural_randomization}, all methods improve as more graph structure is preserved, and FedCORE consistently achieves higher accuracy across all integrity levels. This result complements Observation~2 by showing the performance-level effect of structural corruption, whereas the main empirical study examines how relational changes directly alter Encoder adaptation.

\subsection{Controlled Protocols for Empirical Study}
\label{app:empirical_setup}

This section details the controlled protocols underlying the empirical observations in Sec.~\ref{sec:empirical}. The studies are designed as a progressive diagnostic chain rather than independent performance comparisons. This diagnostic setting is related to prior work on subgraph federated
learning, which studies missing cross-client neighbors and structural
heterogeneity across local subgraphs
\citep{zhang2021subgraph,xie2021federated,baek2023personalized}. Specifically, we examine whether graph-side capacity can substitute for Encoder adaptation, whether graph-relational interventions alter the preferred Encoder update, and whether the resulting Encoder and GNN adaptations remain
independently composable. Unless otherwise specified, all experiments use identical node contents, labels, data splits, pretrained backbones, and optimization budgets, with only the investigated factor being changed.

\subsubsection{Diagnosing the Limit of Graph-Side Adaptation}

The first study examines whether the benefit of Encoder adaptation can be reproduced simply by increasing GNN capacity. We conduct the analysis on Toys and QB and keep the pretrained multimodal Encoder completely frozen. For each GNN family $g\in\{\mathrm{GCN},\mathrm{GraphSAGE},\mathrm{GAT}\}$, we construct a sequence of models with increasing hidden capacity and record their trainable parameter budgets $P_g$. Their performance is compared with Encoder-LoRA and full fine-tuning under the same data partition and training protocol.

Let $\mathcal{S}_{g}(P)$ denote the performance of a frozen-Encoder model with GNN family $g$ and approximately $P$ trainable parameters, and let $\mathcal{S}_{E}$ denote the performance obtained when the Encoder is adapted. We characterize the residual representation gap at budget $P$ as

\begin{equation}
\Delta_{\mathrm{cap}}(P)
=
\mathcal{S}_{E}
-
\max_{g\in\mathcal{G}}
\mathcal{S}_{g}(P),
\quad
\mathcal{G}
=
\{\mathrm{GCN},\mathrm{GraphSAGE},\mathrm{GAT}\}.
\end{equation}

A positive $\Delta_{\mathrm{cap}}(P)$ across sufficiently expressive GNN configurations indicates that additional graph-side capacity cannot fully substitute for changing the multimodal representation itself. Importantly, this study does not require GNN performance to strictly saturate. The diagnostic criterion is whether stronger GNNs can \emph{consistently close} the gap to Encoder-adapted models. As observed in Fig.~\ref{fig:obs1}, increasing GNN capacity improves performance, particularly on QB, yet the frozen-Encoder configurations remain consistently constrained relative to Encoder adaptation. This supports the premise that federated multimodal GFMs should expose adaptation capacity on both perception and graph-reasoning sides.

\subsubsection{Controlled Relational Intervention}

The second study examines whether Encoder adaptation is affected by the graph
on which the representations are consumed, beyond node-level multimodal
content. To examine this effect, we construct a family of controlled graphs

\begin{equation}
\mathcal{G}^{(\tau)}
=
\left(
\mathcal{V},
\mathcal{A}^{(\tau)},
\mathbf{X},
\mathbf{Y}
\right),
\quad
\tau\in[0,1],
\end{equation}

where node identities, multimodal contents $\mathbf{X}$, labels $\mathbf{Y}$, and train/validation/test splits remain fixed, while only the relational structure $\mathcal{A}^{(\tau)}$ is changed. Relational interventions are generated through constrained edge rewiring with approximately preserved node-degree profiles,

\begin{equation}
\left\|
\mathbf{d}^{(\tau)}
-
\mathbf{d}^{(0)}
\right\|_1
\approx 0,
\end{equation}

so that the intervention primarily changes \emph{who is connected to whom} rather than trivially changing graph density or node exposure.

We quantify the semantic consequence of a structural intervention using the change of the neighborhood-conditioned semantic profile. Let $\mathbf{z}_i$ denote the fixed multimodal representation of node $i$ and

\begin{equation}
\mathbf{h}_i^{(\tau)}
=
\frac{
\sum_{j\in\mathcal{N}_i^{(\tau)}}\mathbf{z}_j
}{
|\mathcal{N}_i^{(\tau)}|+\epsilon
}.
\end{equation}

The relational semantic shift is measured as

\begin{equation}
D_{\mathrm{rel}}^{(\tau)}
=
\frac{1}{|\mathcal{V}|}
\sum_{i\in\mathcal{V}}
\left[
1-
\cos
\left(
\mathbf{h}_i^{(\tau)},
\mathbf{h}_i^{(0)}
\right)
\right].
\end{equation}

This quantity measures how strongly the intervention changes the semantic context delivered by graph relations while leaving the node content itself unchanged.

For each intervened graph, we keep the GNN architecture fixed and optimize only Encoder-LoRA parameters together with the task head. Rather than comparing raw LoRA factors, whose factorization is non-unique, we analyze the effective update of each adapted layer,

\begin{equation}
\Delta\mathbf{W}_{\ell}^{(\tau)}
=
\mathbf{B}_{\ell}^{(\tau)}
\mathbf{A}_{\ell}^{(\tau)}.
\end{equation}

We concatenate the normalized effective updates across adapted Encoder layers,

\begin{equation}
\mathbf{u}_{E}^{(\tau)}
=
\operatorname{vec}
\left(
\left\{
\frac{
\Delta\mathbf{W}_{\ell}^{(\tau)}
}{
\|\Delta\mathbf{W}_{\ell}^{(\tau)}\|_{F}+\epsilon
}
\right\}_{\ell\in\mathcal{L}_{E}}
\right),
\end{equation}

and measure its deviation from the original relational environment as

\begin{equation}
d_{\mathrm{upd}}^{(\tau)}
=
1-
\cos
\left(
\mathbf{u}_{E}^{(\tau)},
\mathbf{u}_{E}^{(0)}
\right).
\end{equation}

To compare the relational intervention with a lower-semantic-shift control,
we construct perturbations that produce smaller observed changes in
neighborhood semantic profiles. We refer to these as low-semantic-shift
perturbation controls rather than semantic-preserving placebo interventions,
because neighborhood label homophily is not explicitly constrained. The diagnostic therefore compares the observed Encoder-update displacement
under the relational interventions and the low-semantic-shift controls.

\begin{equation}
\mathcal{R}
=
\frac{
d_{\mathrm{upd}}^{\mathrm{rel}}
}{
d_{\mathrm{upd}}^{\mathrm{plc}}+\epsilon
},
\end{equation}

together with the dose--response coefficient between relational shift and Encoder-update movement,

\begin{equation}
\widehat{\beta}
=
\frac{
\sum_{\tau}
\left(D_{\mathrm{rel}}^{(\tau)}-\overline{D}_{\mathrm{rel}}\right)
\left(d_{\mathrm{upd}}^{(\tau)}-\overline{d}_{\mathrm{upd}}\right)
}{
\sum_{\tau}
\left(D_{\mathrm{rel}}^{(\tau)}-\overline{D}_{\mathrm{rel}}\right)^2+\epsilon
}.
\end{equation}

As reported in Sec.~3, the strongest relational intervention produces an
update distance of $0.088$, compared with $0.036$ under the low-semantic-shift
control, yielding a $2.4\times$ difference. The corresponding dose--response
slopes are $0.70$ and $0.24$. These results do not imply that topology
heterogeneity is necessarily stronger than feature heterogeneity. They provide
controlled evidence that graph-relational perturbations are associated with
systematic changes in the preferred Encoder adaptation. Because neighborhood
label homophily is not explicitly controlled, the result does not establish a
pure topology-only causal effect. This motivates representing perception
adaptation as a relation-sensitive state rather than as an isolated
multimodal adapter.

\subsubsection{Cross-Client Encoder--GNN Compatibility}

The preceding study shows that the Encoder update depends on the local relational environment. We therefore examine whether Encoder and GNN adaptations learned within the same client remain independently interchangeable. For each client $k$, we obtain a locally adapted pair $(E_k,G_k)$ and evaluate every cross-client composition $(E_i,G_j)$. This produces the compatibility matrix

\begin{equation}
\mathbf{M}_{ij}
=
\mathcal{S}
\left(
E_i,G_j;\mathcal{D}
\right),
\quad
i,j\in\{1,\ldots,K\},
\end{equation}

where $\mathcal{S}(\cdot)$ denotes downstream performance under a common evaluation protocol. Diagonal entries correspond to Encoder--GNN adaptations formed under the same client environment, whereas off-diagonal entries deliberately break this locally learned correspondence.

We quantify this effect through the pairing gap

\begin{equation}
\Delta_{\mathrm{pair}}
=
\frac{1}{K}
\sum_{i=1}^{K}\mathbf{M}_{ii}
-
\frac{1}{K(K-1)}
\sum_{i\neq j}\mathbf{M}_{ij}.
\end{equation}

A large $\Delta_{\mathrm{pair}}$ indicates that the Encoder and GNN do not behave as independently exchangeable adapters: their effectiveness depends on being paired with the complementary state learned under the same relational environment. We apply the same evaluation protocol to the three parameterizations. Because the baselines and FedCORE also differ in their adapter coordinates, this diagnostic compares end-to-end compatibility and does not by itself isolate the effect of core sharing from coordinate anchoring.

The observed pairing gaps are $30.6$ for Independent Joint LoRA, $22.8$ after module-wise federated aggregation, and $5.9$ for FedCORE. Independent Joint LoRA therefore demonstrates that simply optimizing both modules with the same local objective is insufficient to make their adaptations independently composable. Module-wise aggregation partially reduces the discrepancy but still aggregates two marginal parameter states rather than the state that generates their compatibility. FedCORE instead constrains both module updates to be realizations of the same client-specific core, substantially reducing the mismatch while preserving strong matched-pair performance.

These three diagnostics form the empirical basis of FedCORE. The capacity study establishes the need to adapt multimodal perception, the intervention study shows that such adaptation is conditioned by graph relations, and the pairing study shows that perception and graph reasoning form a compatible client-specific state. The resulting design principle is therefore not simply to make both modules trainable, but to \emph{jointly represent} their relation-conditioned adaptation through a common state that can survive federated aggregation.

\subsection{Theoretical Properties of Shared-Core Adaptation}
\label{app:theory}

\setcounter{proposition}{0}
\renewcommand{\theproposition}{A.\arabic{proposition}}

We provide several theoretical properties that connect the empirical
observations in Sec.~\ref{sec:empirical} with the design of FedCORE. Rather
than establishing a generic convergence guarantee for federated optimization,
our analysis focuses on four questions specific to our setting: why graph
relations can affect Encoder adaptation, how the shared core constrains
Encoder--GNN compatibility, why such compatibility is preserved after
federated aggregation, and under what conditions a local geometry-balanced
update decreases the objective. These propositions characterize the shared
linear parameterization and a single preconditioned local step. They do not
constitute a convergence guarantee for multi-step federated optimization with
client drift or state-quality weighting.

\paragraph{Relation-conditioned Encoder adaptation.}

Consider the local prediction function
\begin{equation}
\widehat{\mathbf{Y}}_k
=
H_{\omega_k}
\left(
G_{\Phi(\mathbf{C}_k)}
\left(
E_{\Theta(\mathbf{C}_k)}(\mathbf{X}_k),
\mathbf{A}_k
\right)
\right),
\end{equation}
where both Encoder and GNN parameters are induced by the client core
$\mathbf{C}_k$. Let $\ell_k$ denote the task loss.

\begin{proposition}[Relation-conditioned Encoder gradient]
\label{prop:relation_gradient}
Let
\[
\mathbf{H}_k
=
E_{\Theta(\mathbf{C}_k)}(\mathbf{X}_k)
\]
and assume that the GNN, task head, and task loss are differentiable. The
gradient received by the Encoder satisfies
\begin{equation}
\nabla_{\mathbf{H}_k}\ell_k
=
\mathbf{J}_{G}^{\top}
(\mathbf{H}_k,\mathbf{A}_k)
\mathbf{J}_{H}^{\top}
\nabla_{\widehat{\mathbf{Y}}_k}\ell_k,
\end{equation}
where $\mathbf{J}_{G}(\mathbf{H}_k,\mathbf{A}_k)$ is the Jacobian of graph
propagation with respect to its input representations and $\mathbf{J}_{H}$
is the Jacobian of the task head. Therefore, even when $\mathbf{X}_k$ and
the labels remain unchanged, modifying $\mathbf{A}_k$ can alter the gradient
applied to the multimodal Encoder whenever
\begin{equation}
\mathbf{J}_{G}(\mathbf{H}_k,\mathbf{A}_k)
\neq
\mathbf{J}_{G}(\mathbf{H}_k,\mathbf{A}'_k).
\end{equation}
\end{proposition}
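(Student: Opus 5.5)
The plan is a direct chain-rule argument, followed by a short observation about how $\mathbf A_k$ enters the Jacobian. Write the composite map as $\widehat{\mathbf Y}_k = H_{\omega_k}(\mathbf Z_k)$ with $\mathbf Z_k = G_{\Phi(\mathbf C_k)}(\mathbf H_k,\mathbf A_k)$. Treat $\mathbf H_k$, $\mathbf Z_k$, and $\widehat{\mathbf Y}_k$ as vectorized, so that all Jacobians are ordinary matrices. Define $\mathbf J_H = \partial \widehat{\mathbf Y}_k/\partial \mathbf Z_k$, evaluated at $\mathbf Z_k$, and $\mathbf J_G(\mathbf H_k,\mathbf A_k) = \partial \mathbf Z_k/\partial \mathbf H_k$, with $\mathbf A_k$ and $\Phi(\mathbf C_k)$ held fixed. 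The differentiability assumptions guarantee that both exist.

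\textbf{Step 1: the gradient identity.} Apply the multivariate chain rule to $\ell_k(\widehat{\mathbf Y}_k(\mathbf Z_k(\mathbf H_k)))$. This gives $D_{\mathbf H_k}\ell_k = D_{\widehat{\mathbf Y}}\ell_k\,\mathbf J_H\,\mathbf J_G$. Transposing the row-vector differential into a column gradient yields exactly $\nabla_{\mathbf H_k}\ell_k = \mathbf J_G^\top \mathbf J_H^\top \nabla_{\widehat{\mathbf Y}_k}\ell_k$. The one point to state explicitly is that $\mathbf H_k$ is treated as an independent input here. Its own dependence on $\mathbf C_k$ through $\Theta(\mathbf C_k)$ only matters further upstream, when this gradient is propagated via $\partial \mathbf H_k/\partial\Theta$ and then via the decoder adjoint of equation~\ref{eq:latent_gradient}. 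That upstream propagation is why the identity describes ``the gradient received by the Encoder''.

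\textbf{Step 2: dependence on $\mathbf A_k$.} Fix $\mathbf X_k$, the labels, and $\mathbf C_k$, so $\mathbf H_k$ is unchanged, and replace $\mathbf A_k$ by $\mathbf A_k'$. The identity from Step 1 holds for both graphs, and the two Encoder gradients differ by
\[
\bigl(\mathbf J_G(\mathbf H_k,\mathbf A_k)-\mathbf J_G(\mathbf H_k,\mathbf A_k')\bigr)^\top \mathbf J_H^\top\nabla_{\widehat{\mathbf Y}}\ell_k ,
\]
plus terms arising because $\mathbf J_H$ and $\nabla_{\widehat{\mathbf Y}}\ell_k$ are evaluated at different $\mathbf Z_k$.

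This step is the main obstacle. The statement says ``can alter'', which is an existence claim, not a universal one: a Jacobian difference could in principle be annihilated by the vector $\mathbf J_H^\top\nabla_{\widehat{\mathbf Y}}\ell_k$. So I would interpret ``can'' as genericity, or prove it with an explicit witness. For the witness I would use a linear GCN layer $\mathbf Z = \hat{\mathbf A}\mathbf H\mathbf W$. Here $\mathbf J_G = \mathbf W^\top\otimes\hat{\mathbf A}$, so $\nabla_{\mathbf H}\ell = \hat{\mathbf A}^\top \mathbf R\,\mathbf W^\top$ with $\mathbf R = \partial\ell/\partial\mathbf Z$. Even if $\mathbf R$ were unchanged, $\hat{\mathbf A}\neq\hat{\mathbf A}'$ together with a suitable $\mathbf R$ and an invertible $\mathbf W$ gives different gradients. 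I would also note the general fact that a nonzero matrix difference fails to annihilate generic vectors, so the inequality is the natural condition. Finally, I would remark that the Jacobian inequality is exactly the mechanism the Observation~2 intervention probes.
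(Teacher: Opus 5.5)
Your proposal is correct and follows essentially the same route as the paper: the chain rule gives the identity, and an explicit message-passing layer exhibits the dependence on the graph. The paper uses $\mathbf{Z}=\sigma(\widetilde{\mathbf{A}}_k\mathbf{H}_k\mathbf{W}_G)$, which gives $\nabla_{\mathbf{H}_k}\ell_k=\widetilde{\mathbf{A}}_k^{\top}\mathbf{Q}\mathbf{W}_G^{\top}$; your linear GCN is the case $\sigma=\mathrm{id}$. Your caveats, that the Jacobian difference can be annihilated and that the downstream factor also changes with $\mathbf{A}_k$, match the paper's ``except in degenerate cases'' remark and are slightly more careful. To make your witness airtight, choose a loss linear in $\mathbf{Z}$, so that $\mathbf{R}$ really is independent of $\mathbf{A}_k$.
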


\begin{proof}
By the chain rule,
\begin{equation}
\nabla_{\mathbf{H}_k}\ell_k
=
\left(
\frac{\partial
G_{\Phi(\mathbf{C}_k)}(\mathbf{H}_k,\mathbf{A}_k)}
{\partial \mathbf{H}_k}
\right)^{\top}
\left(
\frac{\partial H_{\omega_k}}
{\partial G_{\Phi(\mathbf{C}_k)}}
\right)^{\top}
\nabla_{\widehat{\mathbf{Y}}_k}\ell_k,
\end{equation}
which gives the stated expression. Since message passing explicitly depends
on $\mathbf{A}_k$, its Jacobian generally changes when the relational
structure changes.

For example, consider a message-passing layer
\begin{equation}
\mathbf{Z}
=
\sigma
\left(
\widetilde{\mathbf{A}}_k
\mathbf{H}_k
\mathbf{W}_{G}
\right).
\end{equation}
Let
\[
\mathbf{Q}
=
\nabla_{\mathbf{Z}}\ell_k
\odot
\sigma'
\left(
\widetilde{\mathbf{A}}_k
\mathbf{H}_k
\mathbf{W}_{G}
\right).
\]
Then
\begin{equation}
\nabla_{\mathbf{H}_k}\ell_k
=
\widetilde{\mathbf{A}}_k^{\top}
\mathbf{Q}
\mathbf{W}_{G}^{\top},
\end{equation}
which explicitly depends on the graph relations. Hence, changing only
$\mathbf{A}_k$ is sufficient to change the learning signal received by the
Encoder, except in degenerate cases where the relevant Jacobians or
downstream gradients coincide.
\end{proof}

Proposition~\ref{prop:relation_gradient} provides a mechanistic explanation
for Observation~2. Graph relations do not merely determine how fixed
representations are propagated; through back-propagation, they also condition
how the upstream multimodal representation should be adapted.

\paragraph{A shared compatibility manifold.}

For an adapted module $j$, FedCORE decodes the shared core as
\begin{equation}
\mathcal{D}_j(\mathbf{C})
=
\alpha_j
\mathbf{U}_j
\mathbf{S}_j
\mathbf{C}
\mathbf{S}_j
\mathbf{V}_j^{\top},
\end{equation}
where $\mathbf{U}_j$ and $\mathbf{V}_j$ are fixed rank-$r$ coordinate bases
satisfying
\[
\mathbf{U}_j^{\top}\mathbf{U}_j
=
\mathbf{V}_j^{\top}\mathbf{V}_j
=
\mathbf{I}_r,
\]
$\mathbf{S}_j$ is a fixed positive diagonal scaling matrix, and
$\alpha_j>0$ is a fixed module-specific scale. Its canonical projection is
\begin{equation}
\mathcal{P}_j(\Delta\mathbf{W})
=
\alpha_j^{-1}
\mathbf{S}_j^{-1}
\mathbf{U}_j^{\top}
\Delta\mathbf{W}
\mathbf{V}_j
\mathbf{S}_j^{-1}.
\end{equation}

\begin{proposition}[Cross-module compatibility]
\label{prop:compatibility}
For every adapted module $j$,
\begin{equation}
\mathcal{P}_j
\left(
\mathcal{D}_j(\mathbf{C})
\right)
=
\mathbf{C}.
\end{equation}
Consequently, all module updates generated by FedCORE belong to the coupled
adaptation space
\begin{equation}
\mathcal{M}
=
\left\{
\left(
\mathcal{D}_1(\mathbf{C}),
\ldots,
\mathcal{D}_{J}(\mathbf{C})
\right)
:
\mathbf{C}\in\mathbb{R}^{r\times r}
\right\},
\end{equation}
which has dimension $r^2$.
\end{proposition}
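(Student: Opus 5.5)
The plan is to verify the left-inverse identity by direct substitution, deduce membership in $\mathcal M$ from the definition of FedCORE updates, and obtain the dimension count from injectivity of the joint decoder.

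\textbf{Left-inverse identity.} Substituting the decoder into the projection gives
\[
\mathcal P_j(\mathcal D_j(\mathbf C))=\alpha_j^{-1}\alpha_j\,\mathbf S_j^{-1}(\mathbf U_j^\top\mathbf U_j)\mathbf S_j\mathbf C\mathbf S_j(\mathbf V_j^\top\mathbf V_j)\mathbf S_j^{-1}.
\]
The orthonormality relations $\mathbf U_j^\top\mathbf U_j=\mathbf V_j^\top\mathbf V_j=\mathbf I_r$ collapse the inner products. This leaves $\mathbf S_j^{-1}\mathbf S_j\mathbf C\mathbf S_j\mathbf S_j^{-1}=\mathbf C$. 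The only care needed is to note that $\mathbf S_j$ is invertible. By \eqref{eq:pretrained_coordinate}, its diagonal entries are $\big((\sigma_i+\epsilon)/(\operatorname{tr}(\boldsymbol\Sigma_j)/r+\epsilon)\big)^{1/4}>0$, because singular values are nonnegative and $\epsilon>0$. Also $\alpha_j>0$ provided $\mathbf W_j^0\neq 0$, which is assumed in the statement.

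\textbf{Membership in $\mathcal M$.} Every module update FedCORE produces has the form $\mathcal D_j(\mathbf C)$ for a single $\mathbf C$ shared across all $j$. Locally this is $\mathbf C_k^{t,e}$, updated by (8) entirely in state space. Globally it is $\mathbf C^{t+1}$, which by \eqref{eq:global_state_update} is a combination of states and is still a single $r\times r$ matrix. Linearity of each $\mathcal D_j$ shows that aggregation commutes with decoding, so the tuple of updates always lies in $\mathcal M$.

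\textbf{Dimension.} The joint map $\mathcal D:\mathbf C\mapsto(\mathcal D_1(\mathbf C),\dots,\mathcal D_J(\mathbf C))$ is linear from $\mathbb R^{r\times r}$, so $\mathcal M$ is its image and hence a linear subspace. The identity from the first step shows that $\mathcal P_1\circ\pi_1\circ\mathcal D=\mathrm{id}$, where $\pi_1$ is the first coordinate projection. Therefore $\mathcal D$ is injective, and $\dim\mathcal M=\dim\mathbb R^{r\times r}=r^2$.

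None of these steps is a real obstacle. The only subtle point is the nondegeneracy check: invertibility of $\mathbf S_j$ and positivity of $\alpha_j$. This needs the truncated SVD to supply $r$ orthonormal columns, which requires $r\le\min(d_j^{\mathrm{out}},d_j^{\mathrm{in}})$ and is possible even when $\mathbf W_j^0$ has lower rank. I would state these as standing assumptions.
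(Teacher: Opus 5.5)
Your proposal is correct and follows essentially the same route as the paper: you substitute directly, using $\mathbf U_j^\top\mathbf U_j=\mathbf V_j^\top\mathbf V_j=\mathbf I_r$ to get the left-inverse identity, and then use injectivity of the joint linear decoder to conclude that $\mathcal M$ is an $r^2$-dimensional linear subspace. Your explicit checks that $\mathbf S_j$ is invertible and that $\alpha_j>0$ (i.e., $\mathbf W_j^0\neq 0$) are a careful addition rather than a departure, since the paper simply takes these as standing assumptions.
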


\begin{proof}
Substituting $\mathcal{D}_j(\mathbf{C})$ into $\mathcal{P}_j$ gives
\begin{align}
\mathcal{P}_j(\mathcal{D}_j(\mathbf{C}))
&=
\mathbf{S}_j^{-1}
\mathbf{U}_j^{\top}
\mathbf{U}_j
\mathbf{S}_j
\mathbf{C}
\mathbf{S}_j
\mathbf{V}_j^{\top}
\mathbf{V}_j
\mathbf{S}_j^{-1} \\
&=
\mathbf{C}.
\end{align}
Hence $\mathcal{D}_j$ is injective for every $j$. The joint decoding
\begin{equation}
\mathcal{D}(\mathbf{C})
=
\left(
\mathcal{D}_1(\mathbf{C}),
\ldots,
\mathcal{D}_{J}(\mathbf{C})
\right)
\end{equation}
is therefore also injective. Since its domain is
$\mathbb{R}^{r\times r}$ and all $\mathcal{D}_j$ are linear, $\mathcal{M}$ is
a linear subspace of dimension $r^2$.
\end{proof}

This property gives a precise interpretation of a \emph{shared adaptation
state}: although different modules receive heterogeneous parameter updates,
every update can be uniquely projected back to the same core.

To quantify deviations from this condition, for arbitrary independently
learned module updates $\{\Delta\mathbf{W}_j\}_{j=1}^{J}$, define the
compatibility defect
\begin{equation}
\mathcal{I}
\left(
\{\Delta\mathbf{W}_j\}_{j=1}^{J}
\right)
=
\min_{\mathbf{C}}
\sum_{j=1}^{J}
\omega_j
\left\|
\mathcal{P}_j(\Delta\mathbf{W}_j)
-
\mathbf{C}
\right\|_F^2,
\qquad
\omega_j>0.
\end{equation}

Its minimizer has the closed form
\begin{equation}
\mathbf{C}^{\star}
=
\frac{
\sum_{j=1}^{J}
\omega_j
\mathcal{P}_j(\Delta\mathbf{W}_j)
}{
\sum_{j=1}^{J}\omega_j
}.
\end{equation}

For FedCORE,
\begin{equation}
\mathcal{I}
\left(
\{\mathcal{D}_j(\mathbf{C})\}_{j=1}^{J}
\right)
=
0,
\end{equation}
whereas independently parameterized Encoder and GNN adapters generally have
a positive defect whenever their projected latent states disagree. This
provides a parameter-space interpretation of the cross-module incompatibility
observed in Observation~3.

\paragraph{Aggregation closure of the coupled state.}

\begin{proposition}[Aggregation consistency]
\label{prop:aggregation}
Let $\mathbf{C}_k$ denote the locally evolved core of client $k$ and let
\begin{equation}
\overline{\mathbf{C}}
=
\sum_{k=1}^{K}p_k\mathbf{C}_k,
\qquad
\sum_{k=1}^{K}p_k=1.
\end{equation}
Then, for every module $j$,
\begin{equation}
\mathcal{D}_j(\overline{\mathbf{C}})
=
\sum_{k=1}^{K}
p_k
\mathcal{D}_j(\mathbf{C}_k),
\end{equation}
and the resulting global module updates remain in $\mathcal{M}$ with zero
compatibility defect.
\end{proposition}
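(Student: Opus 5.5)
The plan is to argue directly from linearity of each decoder $\mathcal{D}_j$, and then invoke Proposition~\ref{prop:compatibility} for the membership and defect claims. First, I will note that $\mathcal{D}_j(\mathbf{C})=\alpha_j\mathbf{U}_j\mathbf{S}_j\mathbf{C}\mathbf{S}_j\mathbf{V}_j^{\top}$ is obtained by left- and right-multiplication with fixed matrices and a fixed scalar. Hence for any scalars $p_k$ and cores $\mathbf{C}_k$, distributivity of matrix multiplication over finite sums gives
\begin{equation*}
\mathcal{D}_j\Bigl(\sum_{k=1}^{K}p_k\mathbf{C}_k\Bigr)
=\alpha_j\mathbf{U}_j\mathbf{S}_j\Bigl(\sum_{k=1}^{K}p_k\mathbf{C}_k\Bigr)\mathbf{S}_j\mathbf{V}_j^{\top}
=\sum_{k=1}^{K}p_k\,\mathcal{D}_j(\mathbf{C}_k).
\end{equation*}
This is the first claim, and it holds for every module $j$. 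The constraint $\sum_k p_k=1$ is not needed for this identity. It matters only for interpreting $\overline{\mathbf{C}}$ as an average, and for the server rule in Eq.~\eqref{eq:global_state_update}. I will remark that the same argument covers that rule by taking the coefficients $(1-\mu)\omega_k^t$ and $\mu$.

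Second, membership in $\mathcal{M}$ follows because the tuple of aggregated module updates equals $(\mathcal{D}_1(\overline{\mathbf{C}}),\ldots,\mathcal{D}_J(\overline{\mathbf{C}}))$ with the single core $\overline{\mathbf{C}}\in\mathbb{R}^{r\times r}$. This is exactly the form defining $\mathcal{M}$. Equivalently, $\mathcal{M}$ is a linear subspace (Proposition~\ref{prop:compatibility}), so it is closed under linear combinations of its elements $\mathcal{D}(\mathbf{C}_k)$.

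Third, for the zero defect, I will apply Proposition~\ref{prop:compatibility}: $\mathcal{P}_j(\mathcal{D}_j(\overline{\mathbf{C}}))=\overline{\mathbf{C}}$ for all $j$. Choosing $\mathbf{C}=\overline{\mathbf{C}}$ in the minimization defining $\mathcal{I}$ makes every term vanish, so $\mathcal{I}\le 0$. Nonnegativity of the weighted squared norms then gives $\mathcal{I}=0$. Consistently, the closed-form minimizer $\mathbf{C}^\star$ equals $\overline{\mathbf{C}}$.

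There is no real obstacle. The only point requiring care is being explicit that aggregating each module separately (module-wise FedAvg of the $\mathcal{D}_j(\mathbf{C}_k)$) yields the same result as decoding the aggregated core. This holds only because all $\mathcal{D}_j$ share the same coefficients $p_k$ and the same cores $\mathbf{C}_k$; with independent adapters this identity would not follow.
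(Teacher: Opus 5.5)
Your proposal is correct and takes the same route as the paper: linearity of $\mathcal{D}_j$ gives the averaging identity, and Proposition~\ref{prop:compatibility} then gives $\mathcal{P}_j(\mathcal{D}_j(\overline{\mathbf{C}}))=\overline{\mathbf{C}}$ for every $j$. Your explicit zero-defect step (taking $\mathbf{C}=\overline{\mathbf{C}}$ in the minimization and using nonnegativity) spells out what the paper leaves implicit when it says the aggregated updates recover the same global state.
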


\begin{proof}
Since $\mathcal{D}_j$ is linear,
\begin{align}
\mathcal{D}_j
\left(
\sum_{k=1}^{K}p_k\mathbf{C}_k
\right)
&=
\alpha_j\mathbf{U}_j\mathbf{S}_j
\left(
\sum_{k=1}^{K}p_k\mathbf{C}_k
\right)
\mathbf{S}_j\mathbf{V}_j^{\top} \\
&=
\sum_{k=1}^{K}
p_k\mathcal{D}_j(\mathbf{C}_k).
\end{align}
Moreover,
\begin{equation}
\mathcal{P}_j
\left(
\mathcal{D}_j(\overline{\mathbf{C}})
\right)
=
\overline{\mathbf{C}}
\end{equation}
for every $j$ by Proposition~\ref{prop:compatibility}. Therefore all
aggregated module updates recover the same global state and remain in
$\mathcal{M}$.
\end{proof}

Under the same linear decoder and the same aggregation weights, aggregating
the shared cores before decoding is equivalent to decoding each local core
first and then averaging the resulting module updates. Therefore, a
module-wise aggregation variant is different from FedCORE only when it changes
the local parameterization, uses module-specific aggregation weights, or
introduces additional module-specific operations.

Proposition~\ref{prop:aggregation} shows that FedCORE restricts local updates
to the shared compatibility family $\mathcal{M}$, which is closed under linear
federated averaging. Independently learned Encoder and GNN updates need not
share a common projected state, leading to cross-client combination
incompatibility. Under fixed linear decoders and common aggregation weights,
core-space aggregation and decoded module-wise averaging are algebraically
equivalent; FedCORE's structural benefit is the shared parameterization and
explicit compatibility constraint.

\paragraph{Exact joint gradient pullback.}

The shared core is not optimized by heuristically averaging Encoder- and
GNN-side gradients. Its gradient follows exactly from the joint objective.
Assume that the core regularization term in the local objective is
\[
\frac{\lambda_r}{2}\|\mathbf{C}_k\|_F^2.
\]

\begin{proposition}[Exact cross-module pullback]
\label{prop:pullback}
Let
\begin{equation}
\mathbf{G}_{k,j}^{W}
=
\nabla_{\mathbf{W}_{k,j}}\mathcal{L}_k.
\end{equation}
Under the Frobenius inner product, the adjoint of $\mathcal{D}_j$ is
\begin{equation}
\mathcal{D}_j^{\ast}(\mathbf{G})
=
\alpha_j
\mathbf{S}_j
\mathbf{U}_j^{\top}
\mathbf{G}
\mathbf{V}_j
\mathbf{S}_j.
\end{equation}
Therefore,
\begin{equation}
\nabla_{\mathbf{C}_k}\mathcal{L}_k
=
\sum_{j\in\mathcal{J}}
\mathcal{D}_j^{\ast}
\left(
\mathbf{G}_{k,j}^{W}
\right)
+
\lambda_r\mathbf{C}_k.
\end{equation}
\end{proposition}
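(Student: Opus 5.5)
The proof has two parts: compute the adjoint of each $\mathcal{D}_j$ under the Frobenius inner product, then apply the chain rule through the map $\mathbf{C}_k\mapsto\{\mathbf{W}_{k,j}\}_{j\in\mathcal J}$ and add the derivative of the regularizer.

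\textbf{The adjoint.} Since $\mathcal{D}_j$ is linear, it suffices to verify $\langle \mathcal{D}_j(\mathbf{C}),\mathbf{G}\rangle_F=\langle \mathbf{C},\mathcal{D}_j^{\ast}(\mathbf{G})\rangle_F$ for all $\mathbf{C}\in\mathbb{R}^{r\times r}$ and all $\mathbf{G}\in\mathbb{R}^{d_j^{\mathrm{out}}\times d_j^{\mathrm{in}}}$. I will write
\[
\langle \alpha_j\mathbf{U}_j\mathbf{S}_j\mathbf{C}\mathbf{S}_j\mathbf{V}_j^\top,\mathbf{G}\rangle_F=\alpha_j\operatorname{tr}\!\left(\mathbf{V}_j\mathbf{S}_j^\top\mathbf{C}^\top\mathbf{S}_j^\top\mathbf{U}_j^\top\mathbf{G}\right).
\]
By cyclicity of the trace this equals $\operatorname{tr}\!\left(\mathbf{C}^\top\,\alpha_j\mathbf{S}_j^\top\mathbf{U}_j^\top\mathbf{G}\mathbf{V}_j\mathbf{S}_j^\top\right)$. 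Because $\mathbf{S}_j$ is diagonal (hence symmetric), $\mathbf{S}_j^\top=\mathbf{S}_j$, which gives the stated formula. Uniqueness of the adjoint follows because the Frobenius pairing is nondegenerate.

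\textbf{The chain rule.} The loss depends on $\mathbf{C}_k$ in two ways: through every adapted matrix $\mathbf{W}_{k,j}=\mathbf{W}_j^0+\mathcal{D}_j(\mathbf{C}_k)$, and directly through the term $\frac{\lambda_r}{2}\|\mathbf{C}_k\|_F^2$. I will separate the two by writing $\mathcal{L}_k=F(\{\mathbf{W}_{k,j}\}_j)+\frac{\lambda_r}{2}\|\mathbf{C}_k\|_F^2$, where $F$ collects $\ell_k^{\mathrm{joint}}$ and $\lambda_c\mathcal{L}_k^{\mathrm{syn}}$.

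For a perturbation $\delta\mathbf{C}$, linearity of $\mathcal{D}_j$ gives $\delta\mathbf{W}_{k,j}=\mathcal{D}_j(\delta\mathbf{C})$. The differential of $F$ is then
\[
\sum_j\langle \mathbf{G}_{k,j}^W,\mathcal{D}_j(\delta\mathbf{C})\rangle_F=\Big\langle \sum_j\mathcal{D}_j^\ast(\mathbf{G}_{k,j}^W),\delta\mathbf{C}\Big\rangle_F .
\]
The regularizer contributes $\langle\lambda_r\mathbf{C}_k,\delta\mathbf{C}\rangle_F$. Identifying the gradient via the Riesz representation under the Frobenius inner product gives the stated formula.

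\textbf{The main obstacle.} This is making the meaning of $\mathbf{G}_{k,j}^W$ precise. It must be the partial derivative of the task part $F$ with respect to $\mathbf{W}_{k,j}$, holding all other modules fixed. Two subtleties arise here.

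First, the same $\mathbf{C}_k$ feeds both the joint and the counterfactual forward passes. However, $b_k$ is stop-gradiented, so the counterfactual branches contribute nothing, and the synergy hinge is differentiable almost everywhere. Where $m+\ell_k^{\mathrm{joint}}-b_k=0$, I will use a subgradient convention.

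Second, the regularizer must not be double counted. I will state explicitly that $\mathbf{G}_{k,j}^W$ excludes the $\mathbf{C}_k$-regularizer, which does not depend on $\mathbf{W}$. With these conventions in place, the computation above is exact.
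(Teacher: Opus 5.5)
Your proposal is correct and follows the same route as the paper: you verify the adjoint identity $\langle \mathcal{D}_j(\mathbf{Z}),\mathbf{G}\rangle_F=\langle \mathcal{D}_j^{\ast}(\mathbf{G}),\mathbf{Z}\rangle_F$ by trace cyclicity, then apply the chain rule through the linear decoders and add $\lambda_r\mathbf{C}_k$ from the regularizer. You also spell out conventions the paper leaves implicit: $\mathbf{S}_j^\top=\mathbf{S}_j$ since $\mathbf{S}_j$ is diagonal, the detached $b_k$ removes any counterfactual-branch contribution, the hinge needs a subgradient convention at its kink, and $\mathbf{G}_{k,j}^W$ must exclude the core regularizer to avoid double counting.
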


\begin{proof}
For an arbitrary perturbation $\mathbf{Z}$,
\begin{align}
\left\langle
\mathbf{G},
\mathcal{D}_j(\mathbf{Z})
\right\rangle_F
&=
\alpha_j
\operatorname{tr}
\left(
\mathbf{G}^{\top}
\mathbf{U}_j\mathbf{S}_j
\mathbf{Z}\mathbf{S}_j\mathbf{V}_j^{\top}
\right) \\
&=
\left\langle
\alpha_j\mathbf{S}_j
\mathbf{U}_j^{\top}
\mathbf{G}
\mathbf{V}_j\mathbf{S}_j,
\mathbf{Z}
\right\rangle_F.
\end{align}
Hence the stated expression is the adjoint mapping. Applying the chain rule
over all decoded modules and differentiating
$\lambda_r\|\mathbf{C}_k\|_F^2/2$ gives the result.
\end{proof}

This shows that Encoder- and GNN-side learning signals are exactly transported
into the same coordinate system and jointly determine the evolution of
$\mathbf{C}_k$. In particular, by Proposition~\ref{prop:relation_gradient},
the Encoder-side component already contains information about the local graph
relations through back-propagation.

\paragraph{Geometry-balanced core optimization.}

The second-moment statistics and preconditioned update below are the same
quantities used in the local optimization procedure in Sec.~5.3. The
sufficient descent result concerns one local update with fixed preconditioners;
it does not establish convergence of the complete federated procedure.
Let
\[
\mathbf{G}_k^e
=
\nabla_{\mathbf{C}_k^e}\mathcal{L}_k^e
\]
denote the core gradient at local step $e$. The client maintains the left and
right second-moment statistics
\begin{align}
\mathbf{L}_k^e
&=
\beta\mathbf{L}_k^{e-1}
+
(1-\beta)
\mathbf{G}_k^e(\mathbf{G}_k^e)^{\top},\\
\mathbf{R}_k^e
&=
\beta\mathbf{R}_k^{e-1}
+
(1-\beta)
(\mathbf{G}_k^e)^{\top}\mathbf{G}_k^e,
\end{align}
where $\beta\in[0,1)$ is a decay coefficient. These statistics are maintained
locally and are not communicated to the server. The resulting update follows the general family of matrix preconditioned
optimization methods that maintain left and right gradient statistics
\citep{gupta2018shampoo}.

For readability, we omit the local-step superscript below and define
\begin{equation}
\mathbf{P}_{L}
=
(\mathbf{L}_k+\epsilon\mathbf{I})^{-1/4},
\qquad
\mathbf{P}_{R}
=
(\mathbf{R}_k+\epsilon\mathbf{I})^{-1/4},
\end{equation}
where $\epsilon>0$ ensures positive definiteness. The geometry-balanced core
update is
\begin{equation}
\mathbf{C}^{+}
=
\mathbf{C}
-
\eta
\mathbf{P}_{L}
\nabla_{\mathbf{C}}\mathcal{L}
\mathbf{P}_{R}.
\end{equation}

\begin{proposition}[Sufficient descent under fixed preconditioning]
\label{prop:descent}
Assume $\mathcal{L}(\mathbf{C})$ is $L_C$-smooth and
$\mathbf{P}_{L},\mathbf{P}_{R}$ are symmetric positive definite and fixed
during one update. Let
\begin{equation}
\mathbf{M}
=
\mathbf{P}_{R}\otimes\mathbf{P}_{L}.
\end{equation}
If
\begin{equation}
0<\eta<
\frac{
2\lambda_{\min}(\mathbf{M})
}{
L_C\lambda_{\max}^2(\mathbf{M})
},
\end{equation}
then every non-stationary update yields
\begin{equation}
\mathcal{L}(\mathbf{C}^{+})
<
\mathcal{L}(\mathbf{C}).
\end{equation}
\end{proposition}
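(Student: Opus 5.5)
We will prove the result with the standard descent lemma for $L_C$-smooth functions, applied after vectorizing the core. Write $\mathbf c=\operatorname{vec}(\mathbf C)$ and $\mathbf g=\operatorname{vec}(\nabla_{\mathbf C}\mathcal L)$. The identity $\operatorname{vec}(\mathbf P_L\mathbf G\mathbf P_R)=(\mathbf P_R^\top\otimes\mathbf P_L)\operatorname{vec}(\mathbf G)$, together with the symmetry of $\mathbf P_R$, turns the geometry-balanced update into the preconditioned gradient step
\begin{equation}
\mathbf c^{+}=\mathbf c-\eta\mathbf M\mathbf g,\qquad \mathbf M=\mathbf P_R\otimes\mathbf P_L .
\end{equation}
Since $\mathbf P_L$ and $\mathbf P_R$ are symmetric positive definite, $\mathbf M$ is symmetric positive definite as well. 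Its eigenvalues are the pairwise products of those of $\mathbf P_L$ and $\mathbf P_R$, so $\lambda_{\min}(\mathbf M)>0$. The Frobenius inner product equals the Euclidean inner product of the vectorizations, so $L_C$-smoothness carries over verbatim to $\mathbf c$.

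Next we apply the descent lemma $\mathcal L(\mathbf c^+)\le\mathcal L(\mathbf c)+\langle\mathbf g,\mathbf c^+-\mathbf c\rangle+\tfrac{L_C}{2}\|\mathbf c^+-\mathbf c\|^2$ and substitute the step. This gives
\begin{equation}
\mathcal L(\mathbf C^+)\le\mathcal L(\mathbf C)-\eta\,\mathbf g^\top\mathbf M\mathbf g+\frac{L_C\eta^2}{2}\,\mathbf g^\top\mathbf M^2\mathbf g .
\end{equation}
We bound the two quadratic forms using the spectrum of $\mathbf M$: $\mathbf g^\top\mathbf M\mathbf g\ge\lambda_{\min}(\mathbf M)\|\mathbf g\|^2$ and $\mathbf g^\top\mathbf M^2\mathbf g\le\lambda_{\max}^2(\mathbf M)\|\mathbf g\|^2$. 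Combining these yields
\begin{equation}
\mathcal L(\mathbf C^+)\le\mathcal L(\mathbf C)-\eta\Bigl(\lambda_{\min}(\mathbf M)-\frac{L_C\eta}{2}\lambda_{\max}^2(\mathbf M)\Bigr)\|\nabla_{\mathbf C}\mathcal L\|_F^2 .
\end{equation}

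The step-size condition in Proposition~\ref{prop:descent} is exactly what makes the bracket strictly positive. For a non-stationary point, $\|\nabla_{\mathbf C}\mathcal L\|_F>0$, so the decrease is strict. This also gives an explicit sufficient-decrease constant.

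The only delicate step is the Kronecker bookkeeping: the vectorization convention, the transpose on $\mathbf P_R$, and the claim that $\mathbf M$ is positive definite. Once that is settled, everything else is routine. A sharper bound, $\mathbf g^\top\mathbf M^2\mathbf g\le\lambda_{\max}(\mathbf M)\,\mathbf g^\top\mathbf M\mathbf g$, is also available, but the cruder bound above is enough to match the stated threshold.
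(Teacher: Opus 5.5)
Your proof is correct and follows the paper's argument essentially line for line: you vectorize the update as $\operatorname{vec}(\mathbf C^{+}-\mathbf C)=-\eta\mathbf M\mathbf g$, apply the $L_C$-smoothness descent inequality, and bound $\mathbf g^\top\mathbf M\mathbf g$ below by $\lambda_{\min}(\mathbf M)\|\mathbf g\|_2^2$ and $\|\mathbf M\mathbf g\|_2^2$ above by $\lambda_{\max}^2(\mathbf M)\|\mathbf g\|_2^2$. Your explicit checks of the Kronecker identity (using the symmetry of $\mathbf P_R$) and of the positive definiteness of $\mathbf M$ supply details that the paper leaves implicit.
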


\begin{proof}
Let
\[
\mathbf{g}
=
\operatorname{vec}
\left(
\nabla_{\mathbf{C}}\mathcal{L}
\right).
\]
Vectorizing the update gives
\begin{equation}
\operatorname{vec}(\mathbf{C}^{+}-\mathbf{C})
=
-\eta\mathbf{M}\mathbf{g}.
\end{equation}

By $L_C$-smoothness,
\begin{align}
\mathcal{L}(\mathbf{C}^{+})
&\leq
\mathcal{L}(\mathbf{C})
-
\eta\mathbf{g}^{\top}\mathbf{M}\mathbf{g}
+
\frac{L_C\eta^2}{2}
\|\mathbf{M}\mathbf{g}\|_2^2 \\
&\leq
\mathcal{L}(\mathbf{C})
-
\eta
\left[
\lambda_{\min}(\mathbf{M})
-
\frac{L_C\eta}{2}
\lambda_{\max}^2(\mathbf{M})
\right]
\|\mathbf{g}\|_2^2.
\end{align}

The stated step-size condition makes the term in brackets strictly positive,
yielding strict descent whenever $\mathbf{g}\neq 0$.
\end{proof}

The analysis therefore establishes a direct connection between our empirical
findings and method design. Graph relations can systematically alter the
Encoder learning signal; Encoder and GNN updates generated from the same core
lie on a common $r^2$-dimensional compatibility manifold; this manifold is
preserved by linear federated aggregation; and a single local update with a
fixed positive-definite preconditioner satisfies a sufficient descent
condition.

These results explain the structural motivation of FedCORE, but they do not
provide a convergence guarantee for the complete federated optimization
process. In particular, they do not analyze client drift over multiple local
steps, partial client participation, state-quality weighting, or the
interaction between local preconditioner statistics and server-side state
evolution.

\subsection{Experimental Setup}
\label{app:setup}

\subsubsection{Datasets and Tasks}

We evaluate FedCORE on multimodal graph benchmarks covering diverse
downstream tasks, including node classification, link prediction,
multimodal retrieval, graph-to-text generation, and graph-to-image
generation. All datasets follow the federated multimodal graph learning protocol introduced by MM-OpenFGL~\citep{li2026mm}, where each client owns a local multimodal graph and raw data remain decentralized.

\paragraph{Amazon product graphs.}
Toys and Grocery are multimodal Amazon product graphs. Each node represents
an individual product with textual and visual attributes, while product--product
relations, such as co-purchase or complementary-product relations, define the
graph edges. Toys contains 20,695 nodes and 18 product categories, whereas
Grocery contains 17,074 nodes and 20 product categories. We evaluate node
classification on both datasets and report classification accuracy.

\paragraph{NineRec-based datasets.}
Bili Music, DY, QB, and Bili Cartoon are derived from the NineRec benchmark
\citep{zhang2024ninerec}. These datasets contain multimodal user--item
interaction graphs with heterogeneous semantic information. Following
MM-OpenFGL~\citep{li2026mm}, we evaluate link prediction on Bili Music
and DY using AUC. For QB and Bili Cartoon, we evaluate multimodal retrieval
using Recall@5, measuring whether the relevant item appears among the
top-five retrieved candidates.

\paragraph{Flickr30k graph-to-text generation.}
For graph-to-text generation, we use the preprocessed Flickr30k graph with
31,783 nodes and partition it into five clients using
\texttt{subgraph\_fl\_louvain}. Visual features are extracted using CLIP
ViT-L/14 with dimension 768, while textual features are extracted using
RoBERTa-base. We use the trainable
\texttt{flan-t5-base-VG-factual-sg} decoder with
\texttt{freeze\_lm=false} and no PEFT adapter. Neighbor information is
constructed using the embedding-based neighbor mode, with at most four text
neighbors, four image neighbors, and one neighbor layer. The maximum input
and output lengths are 1024 and 128 tokens, respectively. The model is
trained for five epochs with a learning rate of $1\times10^{-3}$, a batch size
of 4, weight decay of $1\times10^{-2}$, and gradient accumulation over four
steps. Generation quality is evaluated using ROUGE-L.

\paragraph{SemArt graph-to-image generation.}
For graph-to-image generation, we use the preprocessed SemArt graph with
21,382 nodes and partition it into five clients using
\texttt{subgraph\_fl\_louvain}. The graph-conditioned generator is based on
Stable Diffusion v1.5 with a resolution of $256\times256$. Graph information
is incorporated through a GNN, a GraphAdapter, and a Q-Former. We use four
graph neighbors, a cross-attention frequency of 2, and four additional CLIP
context tokens. The model is trained for 10 epochs with a learning rate of
$1\times10^{-4}$, a batch size of 16, and Adam weight decay of
$1\times10^{-2}$. Generation quality is evaluated using CLIP-Score.

The datasets and their downstream tasks are described above, and all
experiments follow the federated multimodal graph learning protocol of
MM-OpenFGL.

\subsubsection{Evaluation Metrics}

For node classification tasks, we report Accuracy, defined as the proportion of correctly classified nodes:

\begin{equation}
\mathrm{Accuracy}
=
\frac{1}{N}
\sum_{i=1}^{N}
\mathbb{I}
(\hat y_i=y_i).
\end{equation}

For link prediction tasks, we use Area Under the ROC Curve (AUC), which measures the probability that a randomly sampled positive edge receives a higher score than a negative edge.

For multimodal retrieval tasks, we adopt Recall@5 (R@5), which evaluates the percentage of queries whose corresponding targets appear in the top-5 retrieved candidates.

For graph-to-text generation, we report ROUGE-L, which measures the longest-common-subsequence overlap between the generated text and its
reference text.

For graph-to-image generation, we report CLIP-Score, computed as the cosine
similarity between the CLIP embeddings of the generated image and its corresponding target image. We use the pretrained CLIP ViT-L/14 model with
its standard image and text preprocessing.

\subsubsection{Baselines}

We compare FedCORE with representative baselines from three categories.

\textbf{Federated Learning Baseline.}
FedAvg~\citep{mcmahan2017communication} is the standard federated optimization method that independently averages client model parameters.

\textbf{Federated Graph Foundation Model Baselines.}
FedGFM+~\citep{zhu2025fedgfm}, STAGE~\citep{chen2026stage}, and FedGAMMA~\citep{li2026fedgamma} are representative approaches for adapting pretrained graph foundation models under federated settings. These methods investigate parameter-efficient adaptation, knowledge sharing, and cross-client transferability.

\textbf{Federated Multimodal Graph Baseline.}
Federated PLANET~\citep{liu2026toward} extends multimodal graph foundation models to decentralized environments and serves as a strong comparison for multimodal graph adaptation.

\paragraph{Adaptation controls.}
To analyze the contribution of Encoder--GNN coordination, we include two
adaptation controls.

\emph{Frozen Encoder + GNN} updates only graph-side parameters while keeping
the pretrained multimodal Encoder fixed. This baseline evaluates whether
graph reasoning alone can achieve effective adaptation.

\emph{Independent Joint LoRA + FedAvg} is implemented as an isolated
matched-backbone control. The common backbone uses the frozen visual,
textual, and fusion projections from FedCOREv2, with input features extracted
by CLIP ViT-L/14. Independent LoRA adapters are applied to the visual
projection, textual projection, fusion projection, and the three custom
residual GNN transformation matrices. The GNN contains three message-passing
layers with hidden dimension 256. For each adapted matrix, the update is
parameterized as
\[
\Delta \mathbf{W}_j
=
\frac{\alpha}{r}\mathbf{B}_j\mathbf{A}_j,
\]
where $\alpha=4r$ and no LoRA dropout is used. We evaluate ranks
$r\in\{1,2,4,8,16,32\}$ in the rank-sensitivity experiments; the
adaptation-scale comparison uses $r=24$.

The LoRA factors are optimized independently on each client and aggregated
using FedAvg. The same five clients, full client participation, warm-up
schedule, local epochs, optimizer, and data splits are used as for FedCORE.
The local task classifier is trainable but remains client-specific: parameters
whose names start with \texttt{classifier.} are excluded from the federated
payload and are not averaged by the server. Thus, the reported trainable
parameter count includes both LoRA factors and the local task head, whereas
the communication payload counts only the LoRA $\mathbf{A}_j$ and
$\mathbf{B}_j$ tensors.

\subsubsection{Implementation Details}

All methods are evaluated under identical data splits, pretrained backbones,
and downstream settings. Following MM-OpenFGL~\citep{li2026mm}, each
dataset is partitioned into five clients unless otherwise specified. For node
classification, multimodal retrieval, graph-to-text generation, and
graph-to-image generation, we use Louvain community detection to construct
the client subgraphs. For link prediction, we use Metis graph partitioning\citep{wang2022federatedscope}.
All compared methods use the same client partitions.

For FedCORE, only the upper layers of the multimodal Encoder, fusion layers,
and primary GNN transformations are adapted, while lower backbone layers
remain frozen. The latent state dimension and method-specific optimization
hyperparameters are selected using the validation set and fixed before
evaluation. For the Independent Joint LoRA control, clients use two local epochs with
Adam optimization, learning rate $0.004$, weight decay $0.0012$, batch size
512, and dropout rate $0.38$. The warm-up stage lasts 10 rounds, followed by
45 adaptation rounds. All five clients participate in every communication
round.

\section{Additional Efficiency and Scalability Analysis}
\label{app:efficiency_scalability}

\textbf{Convergence Efficiency.}
Figure~\ref{fig:efficiency_scalability}(a) compares the performance
of FedCORE and representative baselines across communication rounds
on Toys node classification. FedCORE reaches a strong performance
level within fewer rounds and remains stable afterwards, providing
additional evidence that preserving Encoder--GNN coordination can
improve federated optimization.

\textbf{Scalability to Adaptation Scale.}
Figure~\ref{fig:efficiency_scalability}(b) examines how the methods
behave as the adaptation scale increases while the core dimension of
FedCORE remains fixed. Independent Joint LoRA communicates more
module-specific parameters as additional layers are adapted, whereas
FedCORE continues to transmit a compact $r\times r$ core. The results
provide additional evidence of the performance--communication trade-off
associated with increasing the adaptation scale.

\begin{figure*}[htbp]
\centering

\begin{minipage}[t]{0.35\textwidth}
\centering
\includegraphics[width=\linewidth]
{figure/convergence\_toys\_nodecls\_baselines\_allbands\_no\_shadow.pdf}\\[-2pt]
\small (a) Convergence efficiency.
\end{minipage}
\hspace{0.025\textwidth}
\begin{minipage}[t]{0.58\textwidth}
\centering
\includegraphics[width=\linewidth]
{figure/grocery\_nodecls\_adaptation\_scale\_tradeoff\_with\_baselines.pdf}\\[-2pt]
\small (b) Adaptation scale.
\end{minipage}

\vspace{-4pt}

\caption{
\textbf{Additional convergence and scalability analysis.}
(a) Test accuracy on Toys node classification across communication
rounds.
(b) Performance--communication trade-off on Grocery node classification
as the adaptation scale increases.
}
\label{fig:efficiency_scalability}
\end{figure*}
\section{Additional Robustness Analysis}
\label{app:additional_robustness}

\textbf{Robustness to Client Heterogeneity.}
We evaluate FedCORE on Toys node classification under client-specific feature
drift and topology perturbation. We keep the node set, labels, data splits,
and training settings fixed. For feature drift, we add a shared offset
$b_k\sim\mathcal{N}(0,\alpha^2I)$ to all nodes of client $k$, with
$\alpha\in\{0,0.15,0.30,0.45,0.60\}$. For topology perturbation, we inject
undirected edges between nodes from different Louvain communities before the
final five-client partition, using
$\rho\in\{0,0.0625,0.125,0.1875,0.25\}$. As shown in
Figure~\ref{fig:heterogeneity}, FedCORE degrades more slowly than the
baselines as either perturbation increases.

\begin{figure*}[t]
    \centering
    \includegraphics[width=0.8\linewidth]
    {figure/q3\_robustness\_client\_heterogeneity.pdf}
    \caption{
    \textbf{Robustness to client heterogeneity.}
    The left panel varies client feature drift $\alpha$, while the right panel
    varies the cross-community edge-injection ratio $\rho$ before the final
    client partition. The node set, labels, and data splits remain fixed.
    }
    \label{fig:heterogeneity}
\end{figure*}

\appendix

\end{document}